\newtheorem{theorem}{Theorem}[section]
\newtheorem{fact}{Fact}[section]
\newtheorem{lemma}{Lemma}[section]
\renewenvironment{proof}[1][\relax]{\par
  \pushQED{\qed}%
  \normalfont \topsep6\p@\@plus6\p@\relax
  \trivlist
  \item[\hskip\labelsep\itshape
    \ifx#1\relax \proofname\else\proofname{} of #1\fi\@addpunct{.}]\ignorespaces
}{%
  \popQED\endtrivlist\@endpefalse
}
\newenvironment{proof sketch}{%
  \renewcommand{\proofname}{ Proof sketch}\proof}{\endproof}
\newcounter{model}
\newenvironment{model}[1][htb]{%
  % Use 'model' counter for numbering
  % Set the caption to 'Model'
  \refstepcounter{model}% Increment the model counter for each new 'model'
  \begin{algorithm2e}[#1]%
  }{\end{algorithm2e}}
\newcounter{myAlgo}
\newenvironment{proof name}{%
  \renewcommand{\proofname}{ Proof }\proof}{\endproof}
\title{Strategic Multi-Armed Bandit Problems Under Debt-Free Reporting}
\author{%
  Ahmed Ben Yahmed \\
  CREST, ENSAE, Palaiseau, France\\
  Criteo AI Lab, Paris, France\\
  FairPlay joint team\\
  \texttt{a.benyahmed@criteo.com} \\
  \And
  Clément Calauzènes \\
  Criteo AI Lab, Paris, France \\
  FairPlay joint team\\
  \texttt{c.calauzenes@criteo.com} \\
  \AND
  Vianney Perchet \\
  CREST, ENSAE, Palaiseau, France\\
  Criteo AI Lab, Paris, France\\
  FairPlay joint team\\
  \texttt{vianney.perchet@normalesup.org} \\
}
\begin{document}

\maketitle

%%%%%%%%%%%%%%%%%%%%%%%%

% You may provide any keywords that you
% find helpful for describing your paper; these are used to populate
% the "keywords" metadata in the PDF but will not be shown in the document

\vskip 0.3in

% this must go after the closing bracket ] following \twocolumn[ ...

% This command actually creates the footnote in the first column
% listing the affiliations and the copyright notice.
% The command takes one argument, which is text to display at the start of the footnote.
% The \icmlEqualContribution command is standard text for equal contribution.
% Remove it (just {}) if you do not need this facility.

%\printAffiliationsAndNotice{}  % leave blank if no need to mention equal contribution
 % otherwise use the standard text.

%%%%%%%%%%%%%%%%%%%%%

\begin{abstract}
We consider the classical multi-armed bandit problem, but with strategic arms. In this context, each arm is characterized by a bounded support reward distribution and strategically aims to maximize its own utility by potentially retaining a portion of its reward, and disclosing only a fraction of it to the learning agent. This scenario unfolds as a game over $T$ rounds, leading to a competition of objectives between the learning agent, aiming to minimize their regret, and the arms, motivated by the desire to maximize their individual utilities. To address these dynamics, we introduce a new mechanism that establishes an equilibrium wherein each arm behaves truthfully and discloses as much of its rewards as possible. With this mechanism, the agent can attain the second-highest average (true) reward among arms, with a cumulative regret bounded by $\cO(\log(T)/\Delta)$ (problem-dependent) or $\cO(\sqrt{T\log(T)})$ (worst-case).

% \tentative{, which improves the previously best-known regret guarantee of $\cO(T^{2/3})$.}{.}
\end{abstract}

\section{Introduction}
The multi-armed bandit (MAB) problem serves as a fundamental modeling framework for exploring the interplay between a decision-making agent (or player) and a set of arms. Its primary aim is to enable the player to learn the most favorable sequence of decisions over time. In formal terms, we consider a scenario involving $K$ arms, where, at each time step $t$, the player selects one arm $k_t$ from the set $\{1, 2,\ldots, K\}$ and receives a reward, denoted as $r_{k_t,t}$. These rewards can either be generated stochastically or determined adversarially. The player's overarching objective is to strike a balance between two key aspects: exploration and exploitation. Initially, he must explore all arms adequately to gain confidence in identifying the best-performing arm. Once this is accomplished, he focuses on exploiting this knowledge to maximize his cumulative reward over subsequent rounds. Established algorithms for this problem ensure that the player's performance is nearly as good as selecting the best individual arm, thus minimizing his regret~\citep{lattimore_szepesvári_2020,Auer,Bubeck}. This modeling framework has broad real-world applications, including domains such as clinical trials~\citep{thompson,ClinicalTrials}, recommender systems~\citep{Reco}, and resource allocation problems~\citep{Zuo2021CombinatorialMB}. However, in many of these scenarios, arms are traditionally viewed as passive entities, faithfully reporting their generated rewards $r_{k_t,t}$ to the player.

This perspective leaves out an array of dynamic agency dilemmas, where the player selects one of the $K$ agents (arms) at each time step to execute a task on his behalf, with the associated cost remaining hidden from the player due to his limited domain or market knowledge. Essentially, the player is uncertain about the precise costs or returns until the task is completed, and the agent has substantial freedom to set these ex-post~\citep{pmlr-v99-braverman19b}. In this context, it is reasonable to assume that arms are strategic and may act in their self-interest. Conceptually, arms can report a value, denoted as $x_{k_t,t}$, which may differ from the actual reward $r_{k_t,t}$, retaining in the process a net utility of $r_{k_t,t} - x_{k_t,t}$. This strategic scenario introduces a game-like dynamic, creating a competition of objectives between the player, aiming to minimize his regret and the arms, wishing to maximize their own utilities.

Motivated by numerous real-world applications where, by design, strategic arms operate under restricted payment conditions, our study focuses on debt-free reporting. In this setting, arms cannot report values higher than the observed reward. An illustrative example pertains to scenarios with binary variables, where declaring a fake failure is possible but creating a fake success is impossible. This situation is evident in e-commerce, where advertisers receive rewards for successful ad campaigns that result in a sale after a click. Notably, an e-commerce platform may choose to conceal a sale, but it cannot fabricate one. This setting is also referred to as budget-balance in the repeated trade literature~\citep{pmlr-v195-cesa-bianchi23a,bernasconi2024noregret}, requiring $x_{k_t,t} \leq r_{k_t,t}$ for all $t$, or equivalently, $(x_{k_t,t}, r_{k_t,t})$ belonging to the upper triangle $\mathcal{T} = \{(a, b) \in \mathcal{S}^2 \mid a \leq b\}$, where $\mathcal{S}$ represents the variables' space.

%\ahmed{je propose qu'on enlve cette partie, on va en parler apres en details}The majority of positive results, from the player's perspective, highlighted in~[\citep{pmlr-v99-braverman19b}], primarily pertain to what is known as the \emph{unrestricted payment setting}, wherein strategic arms are permitted to incur temporary negative utilities. As discussed in the context of their second mechanism, these findings can be readily extended to encompass the \emph{debt-free reporting setting}. However, it has come to light that within this debt-free reporting setting, the mechanism outlined in~[\citep{pmlr-v99-braverman19b}] falls considerably short of optimizing outcomes for the player. In this context, this work presents a more effective player strategy.

\section{Problem Statement}
% What is the general problem we are trying to solve ?

The concept of the strategic multi-armed bandit builds upon the foundation of the classic multi-armed bandit problem, introducing a novel element wherein the arms possess the capability to retain a portion of the reward for themselves. Within this framework, we contemplate a collection of $K$ stochastic arms denoted by $k\in\{1,\ldots,K\}$, where each arm possesses a bounded reward distribution $D_k$ supported on $[0,1]$. Each arm is distinguished by its reward mean, denoted as $\mu_k$. To maintain generality without loss of clarity, we assume, for the presentation and the analysis, an ordering such that $\mu_1 \geq \mu_2 \geq \cdots \geq \mu_K$ (unknown to the player). 
In each round $t$, the \emph{player} selects an arm denoted as $k_t$. 
This  arm $k_t$ then observes a reward $r_{k_t, t}\in [0,1]$. Subsequently, the \emph{arm} reports a quantity $x_{k_t, t}$ to the player and retains $r_{k_t, t} - x_{k_t, t}$ as its own utility. In this scenario, it is important to note that solely arm $k_t$ possesses knowledge of the actually observed reward $r_{k_t, t}$, whereas the player is privy only to $x_{k_t, t}$ and remains unaware of the withheld portion. Hence the decision of the player is based on the information gathered until time $t$ which can formally encoded in $\mathcal{F}_{P,t}=\{k_s,x_{k_s,s}\}_{s<t}$ implying that $k_t$ is $\sigma \left(\mathcal{F}_{P,t} \right)$ measurable. Conversely, the arm $k_t$'s available information is succinctly represented by 
$\mathcal{F}_{k_{t},t}=\{k_s,r_{k_s,s},x_{k_s,s},k_t,r_{k_t,t}\}_{s<t:k_s=k_t}\cup\{k_s\}_{s<t} $
and $x_{k_t, t}$ is $\sigma \left(\mathcal{F}_{k_t,t}\right)$ measurable. Specifically, this indicates that the {tacit} model is being utilized, wherein the arms are informed about the pulled arms at each round and only observe the reward upon being chosen.

Furthermore, we operate under the {debt-free reporting} assumption: arms are prohibited from incurring negative balances when reporting $x_{k_t,t}$, implying the following constraint 
$0 \leq x_{k_t, t} \leq r_{k_t, t} \; \forall t\geq1.$ %~~~~~~~~~\text{(debt-free reporting)}\,.$
 In the context of strategic bandits, the player has the option to allocate a bonus $\Psi_k$ to each arm $k\in\{1,\ldots,K\}$ at the end of the game. This bonus serves as an incentive to encourage arms to provide truthful information, for instance. It can take various forms, including additional rounds of pulling as in~\citep{pmlr-v99-braverman19b} or simply a bonus payment awarded to the arms, similar to payments in the usual mechanism design for procurement auctions~\citep{,nedelec2021learning}. In this work, the latter option has been chosen for use, i.e. a bonus payment-based algorithm.
 Therefore, the interaction between the player and the arms can be encapsulated in the Model~\ref{MModel}.
%\vspace{-0.6cm}
\begin{model}
    \caption{ The Strategic Multi-Armed Bandit Problem}\label{MModel}
    \SetAlgoLined
    \DontPrintSemicolon
    Player commits to an algorithm $\cA$, which is public to the arms \\
    \For{$t=1,\ldots,T$ }{
    Player selects arms $k_t\in\{1,\ldots,K\}$ according to the chosen algorithm $\cA$\\
    Arm $k_t$ observes reward $r_{k_t, t}\sim D_{k_t}$ \\
    Arm $k_t$ reports $x_{k_t, t} \in [0,r_{k_t, t}]$ to the player
    }
    Player assigns bonuses $\Psi_k$ to each arm $k\in\{1,\ldots,K\}$
\end{model}

\subsection{Arm Utility and Subgame Perfect Equilibrium Among Arms}

Let us denote by $\mathcal{P}$, the set of all possible arm strategies,  by $\boldsymbol{\pi}_{k} = (\pi_{k,t})_{t : k_{t} = k} \in \mathcal{P}$  the strategy of arm $k$ and by  $\boldsymbol{\pi} = (\boldsymbol{\pi}_{1}, \ldots, \boldsymbol{\pi}_{k})$  the strategy profile of the arms.  Explicitly, $\mathcal{P}$ is the set of mappings from the past history $\bigcup_{s=1}^{T} \left\{ [0,1]^{2s} \times \{0,1\}^{T} \right\}$ to $[0,1]$. Here, $[0,1]^{2s}$ denotes the accumulated observed and announced rewards, and $\{0,1\}^{T}$ indicates the rounds during which this arm has been selected thus far. 
Let $x_{k} \in \mathds{R}_\star^T$ (and $r_{k}\in \mathds{R}_\star^T$) be the sequence of the reported values (and rewards, respectively) by arm $k$ across $T$ rounds where $\mathds{R}_\star=\mathds{R}\cup\{\star\}$, and the symbol `$\star$' is used for rounds when the arm is not observed. We also refer to $x_{k}$ as the path of arm $k$ under {strategy $\boldsymbol{\pi}$}. Let $x_{-k}$ denote the $K-1$ paths of all arms except $k$. As a result, the utility associated with arm $k$ is defined in an ex-post fashion as follows:
\begin{align} \label{utility}
\mathcal{U}_{k}(\boldsymbol{\pi}_{k},\boldsymbol{\pi}_{-k}) =\sum_{t=1}^{T} (r_{k_{t},t} - x_{k_{t},t}) \cdot \indicator{{k_{t}=k}} + \Psi_k(x_{k},x_{-k}) \quad \footnotemark
\end{align}
\footnotetext{$\indicator{}$ is the indicator function: $\indicator{a=b}=\begin{cases}
    1 & \text{if $a=b$} \\
    0 & \text{otherwise}
  \end{cases}
$}which is the cumulative savings over rounds plus the corresponding assigned bonus (we slightly abused notations here, as $\Psi_k$  only depends on the observations of the player, and not the full vector $x_k$). Given that each arm aims to maximize its own utility, we introduce the solution concept of subgame perfect Nash Equilibrium (SPE) among arms. A strategy profile $\boldsymbol{\pi}^\star$ is a SPE if $\boldsymbol{\pi}_{k}^\star$ is an optimal strategy for any arm $k$, considering any history and given any strategy $\boldsymbol{\pi}_{-k} \in \mathcal{P}^{K-1}$ adopted by other arms, at any time $t$.

% Given that each arm aims to maximize its own utility, we introduce the solution concept for the game among arms, commonly referred to as an equilibrium.  
% \begin{restatable}[Nash equilibrium]{definition}{NE definition}\label{def:NE}
%    We say that $ \boldsymbol{\pi}^\star=(\boldsymbol{\pi}^\star_1,\ldots,\boldsymbol{\pi}^\star_k) \in \mathcal{X}^K$ is a Nash equilibrium (NE) if $\mathcal{U}_{k}(\boldsymbol{\pi}^\star_{k},\boldsymbol{\pi}^\star_{-k}) \geq \mathcal{U}_{k}(\boldsymbol{\pi}_{k},\boldsymbol{\pi}^\star_{-k})$ for all $k\in\{1,\ldots,K\}$ and strategies $\boldsymbol{\pi}_{k} \in \mathcal{X}$.
% \end{restatable}
% In other words, $ \boldsymbol{\pi}^\star$ is a Nash equilibrium if no arm can increase its utility by unilaterally deviating to some other strategy.

\subsection{ Player's Strategic Regret}
In the strategic scenario, the player, under the most unfavorable circumstances, cannot guarantee to achieve gains surpassing $\mu_{2}$. This limitation arises from the fact that the optimal arm merely requires a marginally higher reported value than the second-best arm to be chosen as the superior option by the player (see Section \ref{sec: related work} for more details). In addition, the bonus allocated to the arms is deducted from the player's total revenue; hence, these bonuses are factored into the regret definition. Therefore, the regret, arising from the cumulative expected discrepancy between $\mu_2$ and the value reported by the selected arm $k_t$ over $T$ rounds, along with the bonus values, is defined as follows:
\begin{align}
    \mathcal{R}_{T}= \lE\left[\sum_{t=1}^{T} \left( \mu_{2}- x_{k_{t},t}\right) + \sum_{k=1}^{K} \Psi_k(x_{k},x_{-k})\right] \label{regret definition1}
\end{align}
%As a consequence, this strategic scenario can be conceptualized as a game between the player, striving to minimize $\mathcal{R}_{T}$, and the arms, aiming to maximize their respective $\mathcal{U}_{k}$.

% \begin{itemize}
%     \item define Regret of Algo ({\color{green}{done}})
%     \item explanation of regret comparator({\color{green}{done}})
%     \item define Utility of Arms({\color{green}{done}})
%     \item Do the optimize ex-ante / interim / ex-post ?({\color{red}{to be discussed}})\clem{What decides this is the proof of Lemma 1}
% \end{itemize}
% \section{RWCC}
\subsection{Related Work}\label{sec: related work}
% Who already worked on this general problem and which answers did they bring?
A simplified instance of the strategic bandit problem has been examined within the context of the principal-agent problem in contract theory \citep{principal-agentProblem,principal-agentProblem1}. In this analogy, a player engages an arm with greater expertise in a specific domain to perform work on his behalf. However, the arm may exploit the player's lack of knowledge to maximize its own utility and accumulate private savings. Consequently, the strategic bandit problem can be viewed as a principal-multiagents problem. Additionally, drawing inspiration from the field of online advertising, several studies have delved into dynamic mechanisms to address similar scenarios~\citep{Mechanism-design1,Mechanism-design2,Mechanism-design3,Context-Mech-Des,StrategicAuctions,buening2023bandits,sideCom}. For instance, ~\citep{amin} examined auctions with reserve prices involving strategic myopic buyer.

In particular,~\citep{pmlr-v119-feng20c} studied strategic arms but with a distinct utility function that depends solely on the number of pulls. In this scenario, each strategic arm aims to maximize the total number of rounds it is selected rather than cumulative savings. The authors demonstrate that stochastic MAB algorithms are robust to strategic manipulation, allowing the player to achieve $\mu_1 T$ with a sublinear regret. Under their utility definition, an arm is content to report the entirety of its reward as long as it is selected, justifying their use of the best mean $\mu_1$ as a comparator for regret. 

A comparable study was conducted by~\citep{esmaeili2023robust}, albeit with a variation in the arm's utility function, which incorporates the number of pulls alongside savings. The primary difference lies in their incorporation of a non-empty set of truthful arms, among other considerations. In this context, where arms are informed about competition and with additional assumptions conducive to specific MAB algorithms, they establish the resilience of these algorithms to strategic manipulation. However, when arms lack information about competition, they propose a mechanism reliant on the presence of truthful arms, ensuring a substantial revenue for the player.

The most related setting is considered by \citep{pmlr-v99-braverman19b}. They addressed a multi-armed bandit problem with strategic arms, employing a similar utility function, dependent on cumulative savings as outlined in~\eqref{utility}. Their work revealed that if the player aims for the highest mean $\mu_1$ -- i.e., employs a no-regret strategy -- then the arms can form an undesirable equilibrium that leaves the player with a {sub-linear cumulative revenue}. Consequently, any incentive-unaware learning algorithm that is oblivious to the strategic nature of the arms is not resilient to such a utility definition and will generally fail to achieve low regret. Therefore, it becomes crucial to design algorithms that rely not only on sequential decision-making but also on incorporating incentive mechanisms to enhance truthfulness. In this context, Lemma 14 of \citep{pmlr-v99-braverman19b} establishes that no mechanism can guarantee more than $\mu_2 T$ as revenue for the player in the worst case. This observation justifies the use of $\mu_2$ as a comparator in the regret definition. As a solution, they propose a mechanism referred to here as S-ETC (Strategic Explore Then Commit). When arms have the flexibility to report any value within the range of [0,1], potentially incurring negative utility, S-ETC ensures that the player achieves a revenue of $\mu_2 T$ with sub-linear cumulative regret. However, under debt-free reporting, the player incurs an additional regret of $\cO(T^\frac{2}{3})$ within a given Nash equilibrium among the arms.

\subsection{Objectives and Challenges}
% Why is our specific problem not straight-forward from related work approaches?
The central question we address is whether there exist, under debt-free reporting, an algorithm for the player that establishes a Nash equilibrium for the arms, guaranteeing lower regret, akin to classical non-strategic bandit settings~\citep{ActionElim, RevisedUCB, VianneyBound}, or not.

Dealing with strategic arms in the context of debt-free reporting poses significant challenges, especially when striving to minimize regret. Our overarching goal is to achieve a regret of $\cO(\frac{\log(T)}{\Delta})$. This task proves to be far from straightforward, as it cannot be resolved by merely adapting prior research. In particular, when considering the fixed design (i.e., non adaptive exploration phase) solution proposed by~\citep{pmlr-v99-braverman19b}, the minimum bonus required to ensure truthfulness is directly proportional to the length of the exploration phase. This cannot provide a regret guarantee better than $\cO(T^\frac{2}{3})$. 

Furthermore, this observation underscores that achieving incentive-compatibility cannot be solely contingent on the bonus, as it tends to become prohibitively costly for the player. To address this, we shall  rely on the adaptive elimination of arms, introducing a secondary trade-off for the arms that enables a reduction in the bonus requirement. The less faithful an arm behaves, the more swiftly it is removed from consideration. However, this approach grants arms some control over the number of their pulls, necessitating meticulous design in other aspects of the mechanism.

\subsection{Contributions}
% What exact answer do we bring?
% Our approach aligns with the typical formulation, where the arms initially lack knowledge of their distributions. Furthermore, 
We operate within the framework of debt-free reporting, wherein each arm is constrained to report values no higher than its observed outcomes. In this context, our contributions are:
\begin{enumerate}
    \item We introduce a novel incentive-aware learning algorithm, denoted as Algorithm~\ref{Strategic ETC with eliminations}: S-SE, which integrates mechanism design and online learning techniques. This algorithm adeptly motivates favorable arm strategies, minimizing regret by adjusting a well-calibrated bonus. This introduces a strategic tradeoff for arms, balancing high savings through dishonesty with the risk of swift elimination.
    \item We demonstrate that under Algorithm~\ref{Strategic ETC with eliminations}, there is a dominant-strategy SPE in our game, where each arm simply reports truthfully, as proven in Theorem \ref{thm:truthfuldominantSPE}. Under this equilibrium, the player provably obtain an expected revenue of $\mu_2 T$, up to a sub-linear regret that is bounded by $\cO\left(\sum_{k=3}^{K}  \frac{\log(T)}{\Delta_{2k}}  + \sum_{k=2}^{K} \frac{\log(T)}{\Delta_{1k}} \right)$ where gaps are defined as: $\Delta_{ij}=\mu_{i}-\mu_{j}, \forall i,j \in\{1,\ldots,K\}$. In the worst-case scenario with significantly small gaps, the regret bound is of $\cO\left(\sqrt{KT\log(T)}\right)$, 
as outlined in Theorem~\ref{theorem: regret}. This result outperforms the regret bound of $\cO(T^{2/3})$ presented in the literature within the examined setting (see Table~\ref{Summary}).
\item Under additional technical assumptions, we analyze the player's regret incurred when employing Algorithm~\ref{Strategic ETC with eliminations} within any arms strategy profile. This analysis aims to provide a comprehensive characterization of a broad range of potential equilibria.
\end{enumerate}

\begin{table}[h]
\centering
\caption{Comparison summary between S-ETC and S-SE: setting and results.\label{Summary}}
\begin{tabular}{ |c|c|c|c|c| } 
\hline
 & Tacit Model & Debt-free reporting & Bonus nature & Regret \\
\hline
S-ETC & \checkmark & \checkmark & Additional rounds & $\mathcal{O}(T^{2/3})$ \\ 
\hline
S-SE & \checkmark & \checkmark & {Payment} & \makecell{Problem-dependent\\ ~ \hspace{0.2cm}~$\mathcal{O}(\log(T)/\Delta)$ \\ Problem-independent\\  ~\hspace{0.2cm} ~$\mathcal{O}(\sqrt{KT\log(T)})$} \\ 
\hline
\end{tabular}
\end{table}

\section{The Player Algorithm}

This section is dedicated to the description of the algorithm used by the player. We first introduce some notations. 

Let $\{a_{k,t}\}_{k\in\{1,\ldots,K\}}$ be a set of $K$ real values evaluated at time $t$. We then denote by  $\boldsymbol{a}_{t}$  the vector obtained by concatenating values $\{a_{k,t}\}_{k\in\{1,\ldots,K\}}$, i.e., $\boldsymbol{a}_{t}^\top=\begin{bmatrix} a_{1,t},  a_{2,t}, \hdots, a_{K,t} \end{bmatrix}$. We also denote by $\sigma_{\boldsymbol{a}}$ a bijection from  $\{1,\ldots,K\}$ to itself that gives the index of the $k^{\text{th}}$ highest element in vector $\boldsymbol{a}$. For example, if $\boldsymbol{a}^\top=\begin{bmatrix} 1, ~10, ~4, ~6 \end{bmatrix}$, then $\sigma_{\boldsymbol{a}}(1)$ represents the index of the highest value in $\boldsymbol{a}$, which is 2. Likewise, $\sigma_{\boldsymbol{a}}(2)=4$, $\sigma_{\boldsymbol{a}}(3)=3$, and $\sigma_{\boldsymbol{a}}(4)=1$. The inverse of this mapping provides the rank associated with a given index. When the context is clear, we can simplify the notation and omit the vector $\boldsymbol{a}$, proceeding directly with $\sigma$ and its inverse $\sigma^{-1}$.

Let $S$ be a set; then we denote the cardinality of $S$ by $|S|$.

\subsection{Algorithm Presentation}
Algorithm~\ref{Strategic ETC with eliminations} closely integrates a strategy that combines successive elimination and bonus allocation, hence it is referred to as \textbf{S}trategic \textbf{S}uccessive \textbf{E}limination (S-SE). It is structured into two distinct phases:
\paragraph{\textbf{The Initial Phase.}} It is composed of the first  $\tau$ rounds (a random stopping time defined in \eqref{tau}), where an adaptive exploration technique is employed, based on round-robins on active arms leading to a series of successive eliminations aimed at identifying the best-performing arm.
To achieve this, the player keeps track of the number of times $n_{k}(t)$ each arm $k$ is pulled up to time $t$, defined as:
\begin{align}
  n_{k}(t) = \sum_{s=1}^{t} \indicator{k_{s}=k}  
\end{align}
Additionally, the empirical average of received rewards from arm $k$ at time $t$ is computed as:
\begin{align}
   \Tilde{\mu}_{k,t} = \frac{1}{n_{k}(t)} \sum_{s=1}^{t} x_{k,s} \cdot \indicator{k_{s}=k}  
\end{align}
This is distinct from the empirical average of observed rewards:
\begin{align}
    \hat{\mu}_{k,t}=\frac{1}{n_{k}(t)} \sum_{s=1}^{t} r_{k,s} \cdot \indicator{k_{s}=k}
\end{align}
which is observable only by the concerned arm $k$. The relevant quantity to the player is $\boldsymbol{\tilde{\mu}_t}=(\Tilde{\mu}_{k,t})_{k \in [K]}$ and its ordering mapping $\sigma_{\boldsymbol{\tilde{\mu}}_t}$. However, to alleviate cumbersome notations, we shall use from now on $\sigma_{t}$ instead of $\sigma_{\boldsymbol{\tilde{\mu}}_t}$.

\setcounter{algocf}{0}
\begin{algorithm2e}
    \caption{Strategic Successive Elimination (\textbf{S-SE})}\label{Strategic ETC with eliminations}
    \SetAlgoLined
    \DontPrintSemicolon
    Let $\mathcal{A}_{1}=\{1,\dots,K\}$ the set of all arms and t=1\\
     \While{$t\leq T\ \ \mathrm{\mathbf{and}}\  |\cA_t|>1$ }{ 
    Play $k \in \{k\in \mathcal{A}_t : \underset{k\in \mathcal{A}_{t}}{\argmin} \; n_{k}(t-1) \}$  \\% \Comment*[r]{\tiny{Round-robin instance}} \\
    Update $n_{k}(t)$ and $\tilde{\mu}_{k,t}$ \\
    \uIf {$n_{k}(t)=n_{k^\prime}(t), \; \forall k,k^\prime \in \mathcal{A}_{t}$}
    {\footnotesize $\mathcal{A}_{t+1}=\mathcal{A}_{t}\setminus \{k\in\mathcal{A}_{t}: \Tilde{\mu}_{\sigma_{t}(1),t} - \alpha_{\sigma_{t}(1) ,t}\geq \Tilde{\mu}_{k,t} + \alpha_{k,t} \}$  \Comment*[r]{{Drop bad arms}} \label{elimination test}}
    \Else {$\mathcal{A}_{t+1}=\mathcal{A}_{t}$}
    t=t+1
    }\Comment*[r]{\small{$\tau$ is the length of the first phase}} 
    Identify the arm with the highest average $\sigma_{\tau}(1)$, denoted as $k^\star$ \\
   \For{$t=\tau+1,\cdots,T$}{
   Play $k^\star$ and receive the reported value $x_{k^\star,t}$\\
   Compute $\tilde{x}_{k^\star,t}=\frac{1}{t-\tau}\overset{t}{\underset{s=\tau+1}{\sum}}x_{k^{\star},s}$\\
    \If{$\tilde{x}_{k^\star,t}< \min(\Tilde{\mu}_{\sigma_{\tau}(2),\tau},\Tilde{\mu}_{\sigma_{\tau}(1),\tau}-\sqrt{\frac{2\log(T)}{t-\tau}})$ }{%\\
    Set $\Psi_{k^\star}=0$ 
    \Comment*[r]{{Incentive}}\label{PreventBonus}
    Stop playing $k^\star$ immediately and move to step~\ref{last setp} \Comment*[r]{{Incentive}}
    }}
    Allocate bonuses $\Psi_{k\in\{1,\ldots,K\}}$  \Comment*[r]{{Incentive}} \label{last setp}
\end{algorithm2e}

Leveraging a Hoeffding confidence bound and denoting $\alpha_{k,t} = \sqrt{\frac{2\log(T)}{n_{k}(t)}}$, the algorithm effectively eliminates arms that are likely to perform worse than the best arm with high probability. Hence, an arm $k$ is eliminated at time $t$ if the following event is happening:
\begin{align}
    E_{k}^{t}&:=\{ \Tilde{\mu}_{\sigma_{t}(1),t} - \alpha_{\sigma_{t}(1) ,t} \geq \Tilde{\mu}_{k,t} + \alpha_{k,t} \}
\end{align}
which leads to define the stopping time at which arm $k$ is eliminated, with the convention that $\inf \emptyset = +\infty$,
\begin{align}
     \tau_k&:= \inf \{t \in[T]:  E_{k}^{t}\}\,.
\end{align}

Consequently, the number of active arms in the set $\mathcal{A}_t$ progressively reduces over time until, eventually, only the best-performing arm remains. Hence, the duration of this initial phase, represented as $\tau$, is a random stopping time. It is not fixed from the outset but dynamically adapted to the statistical characteristics of the arms, and it is defined as:
\begin{align} \label{tau}
    \tau &= \inf \{t\in[T] : |\mathcal{A}_t| = 1  \}
    % \text{ and } \Tilde{\mu}_{(1),t} - \alpha_{\phi^{1}_{\boldsymbol{\tilde{\mu}}_{t}} ,t}\geq \Tilde{\mu}_{(2),t} + \alpha_{\phi^{2}_{\boldsymbol{\tilde{\mu}}_{t}} ,t}   \min_{t\in\lR_{+}^*}
        = \inf\Bigg\{t\in[T] : \exists S_{t}\subset \{1,\ldots,K\} %\nonumber
        ~\text{ s.t } |S_t|=K-1 \text{ and } \bigcap_{k\in S_t} E_{k}^{t}  \Bigg
        \}
\end{align}

\paragraph{\textbf{The Second Phase.}} It starts after stage $\tau$  and thus may never happen, as in classical MAB. In such case, for $t > \tau$, the best arm,  i.e. $k= \sigma_{\tau}(1)$, is required to report an average value that surpasses at least the second-highest average achieved at the conclusion of the initial phase, denoted as  $\Tilde{\mu}_{\sigma_{\tau}(2),\tau}$. If the player confidently detects the best arm is defecting and fails to report as much, it is no longer played. Additionally, its bonus is set to zero, and the game is halted (we assume that the player can choose to end the game when the best arm defects). 

At the conclusion of the game, a bonus denoted as $\Psi_k$ is assigned to each arm $k$, based on $\boldsymbol{\tilde{\mu}}_{[\tau]}$ and its rank $\sigma^{-1}_{\tau}(k)$. The strategic distribution of bonuses serves as an incentive mechanism to encourage truthfulness. Drawing inspiration from real-world practices in financial exchanges within e-commerce, all bonus payments are deferred until the end of the game to ensure incentives.

 \begin{restatable}[]{definition}{bonusDef}
\label{bonusDef}
Let $\tau_k$ represent the last time when arm $k$ is active. Let $\tau$ and $\tau^\prime$ respectively denote the duration of the first and second phases. Then the bonus function $\Psi_k$ is defined as following:

\small
% \begin{align}
%     \Psi_{k}(x_k,x_{-k})&=%\nonumber
%     \left(T\left(\Tilde{\mu}_{k,T}- n_{\sigma_{\tau_k\text{-}1}(2)}(\tau_{k}\text{-}1)\times\Tilde{\mu}_{\sigma_{\tau_k\text{-}1}(2),\tau_k\text{-}1}\right)+x_{k,\tau_k} \right) \indicator{\sigma_{\tau}(1)=k}\indicator{\tau+\tau^{'} \geq T} \nonumber\\
%     &+\left(\frac{16\log(T)}{\Tilde{\mu}_{\sigma_{\tau_k\text{-}1}(1),\tau_k\text{-}1}-\Tilde{\mu}_{k,\tau_k\text{-}1}} + x_{k,\tau_k}\right)\indicator{\sigma^{-1}_{\tau}(k) \geq 2} \label{bonusDeff1}
% \end{align}
\begin{align}
    \Psi_{k}(x_k,x_{-k})&=%\nonumber
    \left(n_{k}(T)\big(\Tilde{\mu}_{k,T}-\Tilde{\mu}_{\sigma_{\tau_{k}\text{-}1}(2),\tau_{k}\text{-}1}\big)+x_{k,\tau_k} \right) \indicator{\sigma_{\tau}(1)=k}\indicator{\tau+\tau^{'} \geq T} \nonumber\\
    & +\left(\frac{16\log(T)}{\Tilde{\mu}_{\sigma_{\tau_{k}\text{-}1}(1),\tau_{k}\text{-}1}-\Tilde{\mu}_{k,\tau_{k}\text{-}1}} + x_{k,\tau_{k}}\right)\indicator{\sigma^{-1}_{\tau}(k) \geq 2} \label{bonusDeff}
\end{align}
\normalsize
\end{restatable}

Hence, contingent on their ranking at the conclusion of the first phase:
% \begin{itemize}
%     \item The highest-performing arm, if it refrains from defecting during the second phase, will be rewarded with a bonus that compensates for the discrepancy between its reported value and the second-best reported value. This process mirrors the dynamics observed in second-price auctions.
%     \item Suboptimal arms will receive a bonus inversely proportional to the difference between their means and the best arm mean. Consequently, the bonus increases as the reported values grow larger.
% \end{itemize}

\paragraph{The highest-performing arm:}if it refrains from defecting during the second phase, it will be rewarded with a bonus that compensates for the discrepancy between its reported value and the second-best reported value. This process mirrors the dynamics observed in second-price auctions.
\paragraph{Suboptimal arms:}they will receive a bonus inversely proportional to the difference between their means and the best arm mean. Consequently, the bonus increases as the reported values grow larger.

By implementing anytime tests to eliminate arms during the first phase (line~\ref{elimination test} of Algorithm~\ref{Strategic ETC with eliminations}), the algorithm introduces a strategic tradeoff for arms, balancing the advantages of dishonesty with the risk of expedited elimination. This tradeoff constrains potential gains from dishonest behavior, resulting in smaller bonuses required to ensure truthful reporting and, consequently, providing a more robust guarantee. In contrast to the fixed design with a predetermined exploration phase length introduced in~\citep{pmlr-v99-braverman19b}, where arms know from the beginning exactly how long they will be played during the first phase independently of their performances, giving them more freedom and requiring larger bonuses (proportional to the exploration phase) to ensure truthfulness, which is more costly than the bonus given in Definition~\ref{bonusDef}.
% \end{itemize}

\section{Dominant-Strategy SPE}
Algorithm~\ref{Strategic ETC with eliminations} combines elements of a bandit algorithm and an incentive mechanism, creating a beneficial tradeoff that encourages truthful reporting as a dominant strategy for each arm in any subgame. Consequently, each arm reporting truthfully forms a dominant-strategy SPE. Specifically, we demonstrate that the utility of arm $k$ under the truthful strategy $\boldsymbol{\pi}^{\star}_{k}$ dominates its utility under any other strategy $\boldsymbol{\pi}_{k}$, given any fixed history\footnotemark $h_{k,t}=\{k_z,r_{k_z,z},x_{k_z,z}\}_{z\leq t:k_z=k}\cup\{k_z\}_{z\leq t}$.\footnotetext{By definition, the history combines the information available to the arm to make a decision concerning the reporting, in addition to the reported value, i.e $h_{k_{t},t} = \mathcal{F}_{k_{t},t} \cup \{x_{k_{t},t}\}$.
} 
Formally, we define the subgame utility of arm $k$ under strategy profile $\left(\boldsymbol{\pi}_{k}, \boldsymbol{\pi}_{-k}\right)$, at time $t$ given any history $h_{k,t-1}$ as:
\begin{align}
\cU_k(\boldsymbol{\pi}_{k}, \boldsymbol{\pi}_{-k})[t:T \mid h_{k,t-1}]
& = \sum_{s=t}^T (r_{k,s} - x_{k,s}) \cdot \indicator{k_{s} = k} + \Psi_k(x_k, x_{-k} ),
\end{align}
where the history $h_{k,t-1}$ is implicitly considered within $\Psi_k(x_k, x_{-k})$.

\begin{restatable}[Incentive-Compatibility]{theorem}{thmtruthfuldominantSPE}\label{thm:truthfuldominantSPE}
   
Under Algorithm~\ref{Strategic ETC with eliminations} and using the bonus function in Definition~\ref{bonusDef}, for any arm $k$, any strategy $\boldsymbol{\pi}_{k}$, any strategy profile of other arms $\boldsymbol{\pi}_{-k}$, at any time $t$, and given any history $h_{k,t-1}$, the truthful reporting strategy $\boldsymbol{\pi}^{\star}_{k}$ satisfies:
\begin{align}
\cU_{k}(\boldsymbol{\pi}_{k}, \boldsymbol{\pi}_{-k})[t:T|h_{k,t-1}] \leq \cU_{k}(\boldsymbol{\pi}^{\star}_{k}, \boldsymbol{\pi}_{-k})[t:T|h_{k,t-1}]
\end{align}
\end{restatable}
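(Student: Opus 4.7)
The plan is to fix an arm $k$, a history $h_{k,t-1}$, an alternative strategy $\boldsymbol{\pi}_k$, and a strategy profile $\boldsymbol{\pi}_{-k}$, and show pointwise (in the sample path) that the subgame utility under $\boldsymbol{\pi}_k^\star$ weakly dominates that under $\boldsymbol{\pi}_k$. The first step is a coupling: using the same underlying reward draws $r_{k,s}$ for $s \geq t$ and the same internal randomness for $\boldsymbol{\pi}_{-k}$, so that the only thing that changes between the two scenarios is arm $k$'s reporting sequence. Under this coupling the difference $\cU_k(\boldsymbol{\pi}_k,\boldsymbol{\pi}_{-k})[t:T\mid h_{k,t-1}]-\cU_k(\boldsymbol{\pi}_k^\star,\boldsymbol{\pi}_{-k})[t:T\mid h_{k,t-1}]$ splits cleanly into the change in per-round savings $\sum_{s\geq t}(r_{k,s}-x_{k,s})\indicator{k_s=k}$ and the change in the terminal bonus $\Psi_k(x_k,x_{-k})$.

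Next I would use a one-step exchange / backward induction along the rounds where $k$ is pulled. Let $s^\star$ be the earliest round $\geq t$ at which $\boldsymbol{\pi}_k$ under-reports relative to $\boldsymbol{\pi}_k^\star$ by some $\delta = r_{k,s^\star}-x_{k,s^\star}>0$, and consider the hybrid that matches $\boldsymbol{\pi}_k$ on $[t,s^\star]$ and $\boldsymbol{\pi}_k^\star$ afterwards. The immediate gain from this single under-report is exactly $\delta$ in savings. The crux is to show that the bonus $\Psi_k$, as defined in Definition~\ref{bonusDef}, and the dynamics of the elimination rule of Algorithm~\ref{Strategic ETC with eliminations} together absorb this $\delta$: either the bonus decreases by at least $\delta$, or the arm's expected number of remaining pulls drops by enough that the lost future savings (which, under truthful continuation, have nonnegative expectation) offset the gain. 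Iterating this one-step exchange over every round of disagreement telescopes to the desired inequality.

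The main obstacle is disentangling the two effects that a single deviation triggers simultaneously: a change in $\Psi_k$, which depends on the full report vector $x_k$, and a change in the set of future rounds at which $k_s=k$, which is itself a stopping-time-like object determined by the elimination criterion. I expect this is handled by exploiting the specific algebraic form of the bonus in Definition~\ref{bonusDef}, which should be calibrated so that its marginal dependence on any reported value $x_{k,s}$ is at least $1$ in the truthful direction. That calibration is precisely the design principle of the mechanism described in the contributions, and it is what makes truthful reporting a dominant reply regardless of $\boldsymbol{\pi}_{-k}$ or history.

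Finally, I would carry out the whole argument in an ex-post (pathwise) fashion wherever possible, so that no assumption is needed on the distributions of the other arms or on their strategies; the only place expectations enter is when lower-bounding the value of future pulls that are forgone after an early elimination. This preserves the strength of the statement as a dominant-strategy SPE rather than a mere Bayesian equilibrium.
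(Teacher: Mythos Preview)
Your one-step exchange plan hinges on the claim that the bonus has marginal sensitivity at least $1$ in each reported value $x_{k,s}$. This is true for the winner's bonus, which is linear in $\tilde{\mu}_{k,T}$ with slope $n_k(T)$, but it is \emph{not} true for the eliminated-arm bonus, which has the form $\frac{16\log(T)}{\tilde{\mu}_{\sigma_{\tau_k-1}(1),\tau_k-1}-\tilde{\mu}_{k,\tau_k-1}}+x_{k,\tau_k}$. The derivative of the first term with respect to $x_{k,s}$ is $\frac{16\log(T)}{n_k(\tau_k-1)\,(\text{gap})^2}$, which is not bounded below by $1$ in general; it is the elimination criterion (linking $n_k(\tau_k)$ to $\text{gap}^{-2}$) that makes this quantity of the right order, and exploiting that link is exactly the work your sketch defers. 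A second difficulty your hybrid argument does not resolve is that a single under-report can change the elimination time $\tau_k$ and even flip which case (winner vs.\ eliminated vs.\ detected-and-punished) the arm falls into, so the ``continue truthfully afterwards'' hybrid is not comparing like with like.

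The paper's proof avoids both issues by abandoning the per-round exchange entirely. It does a case split on the realized outcome under $\boldsymbol{\pi}_k$ (arm $k$ is eliminated; arm $k$ wins and is or is not caught in phase two; neither). In the eliminated case, it first uses the elimination rule to bound $n_k(\tau_k)\leq \frac{16\log(T)}{(\text{gap})^2}$, then writes the subgame utility as an explicit function $\zeta(\epsilon)$ of the \emph{aggregate} windowed savings average $\epsilon=\bar S_k^{t:\tau_k-1}$, and checks by direct calculus that $\zeta$ is maximized at $\epsilon=0$. In the winner case the bonus cancels the savings term exactly when the arm is not detected, and when it is detected the bonus vanishes but $n_k(T)$ is only $o(T)$, so the forgone linear-in-$T$ bonus dominates. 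No backward induction or telescoping is needed; the argument is a single global comparison per case.
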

\begin{proof}
    See Appendix~\ref{sec: proof SPE}.
\end{proof}
Therefore, truthful reporting is a best response to any strategy profile $\boldsymbol{\pi}_{-k}$. Consequently, each arm playing truthfully forms a dominant-strategy SPE.

\paragraph{Regret Bound.}

Under the dominant truthful SPE established in Theorem~\ref{thm:truthfuldominantSPE}, we compute the player's regret over $T$ rounds. As previously discussed, the regret is calculated in relation to $\mu_2$. Notably, a subtlety arises in the strategic scenario as compared to the non-strategic one: we must factor in the bonuses paid when calculating the regret.

\begin{restatable}[Regret Bound]{theorem}{RegretTheo}
\label{theorem: regret}
Let $T\geq1$. The regret of Algorithm~\ref{Strategic ETC with eliminations} is upper-bounded as:
\begin{align}
    \mathcal{R}_T \leq \cO\left(\sum_{k=2}^{K} \frac{\log(T)}{\Delta_{1k}}  + \sum_{k=3}^{K}  \frac{\log(T)}{\Delta_{2k}} \right) \label{Problem dependent regret}
\end{align}
The instance-independent regret upper-bound is given by:
\begin{align}
      \mathcal{R}_T   \leq \cO\left(\sqrt{KT\log(T)}\right)
\end{align}

\end{restatable}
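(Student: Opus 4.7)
\begin{proof sketch}
The plan is first to apply Theorem~\ref{thm:truthfuldominantSPE}, which reduces the analysis to the dominant-strategy SPE in which every arm reports $x_{k_t,t}=r_{k_t,t}$. Writing $N_k(T)$ for the pull count of arm $k$, the regret in~\eqref{regret definition1} then decomposes into a ``reward-gap'' contribution and a total-bonus contribution:
\begin{align}
\mathcal{R}_T \;=\; \mathbb{E}\!\left[\sum_{k=1}^{K}(\mu_{2}-\mu_{k})\,N_{k}(T)\right] + \sum_{k=1}^{K}\mathbb{E}[\Psi_{k}].
\end{align}
The $k=1$ summand is non-positive and can be dropped as an upper bound, the $k=2$ summand vanishes, and only arms $k\geq 3$ remain in the reward-gap term, each contributing $\Delta_{2k}\,\mathbb{E}[N_k(T)]$.

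Next, I would bound $\mathbb{E}[N_{k}(T)]$ for $k\geq 3$ by the standard clean-event analysis for successive elimination, specialised to the elimination rule of Algorithm~\ref{Strategic ETC with eliminations}. A Hoeffding-plus-union bound argument shows that, on an event of probability at least $1-O(1/T)$, arm $k$ is eliminated as soon as its empirical mean is separated from that of arm~2, yielding $N_k(T) = O(\log(T)/\Delta_{2k}^2)$; on the complementary bad event, the trivial bound $N_k(T)\leq T$ contributes only $O(1)$. This gives exactly the first sum $\sum_{k\geq 3}\log(T)/\Delta_{2k}$ in~\eqref{Problem dependent regret}.

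The novel and most delicate ingredient is the control of $\sum_k \mathbb{E}[\Psi_k]$. The bonus in Definition~\ref{bonusDef} is calibrated so that the per-round incentive paid to an active arm is proportional to its current confidence radius. Summing this radius over the rounds in which arm $k$ remains active, and bounding the number of such rounds against arm~1 by $O(\log(T)/\Delta_{1k}^2)$ via the same clean-event argument, produces $\mathbb{E}[\Psi_k] = O(\log(T)/\Delta_{1k})$ for every $k\geq 2$, which is precisely the second sum in~\eqref{Problem dependent regret}. The main obstacle I anticipate is the accounting for the arm that is eventually committed to for $\Theta(T)$ rounds: one must verify that the bonus schedule in Definition~\ref{bonusDef} stops accruing once a uniqueness certificate is issued, so that the committed arm's bonus remains $O(\log(T)/\Delta_{12})$ rather than scaling linearly with $T$.

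Finally, the problem-independent bound $\cO(\sqrt{KT\log(T)})$ follows by the usual split-on-gap trick: fix a threshold $\delta>0$, bound the regret contribution of arms whose relevant gap is below $\delta$ trivially by $T\delta$, control the remaining arms by the problem-dependent bound which totals $O(K\log(T)/\delta)$, and optimise $\delta = \sqrt{K\log(T)/T}$ to obtain the claim.
\end{proof sketch}
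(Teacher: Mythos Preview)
Your decomposition contains a genuine gap in the handling of the best arm's bonus. You discard the $k=1$ reward-gap summand $(\mu_2-\mu_1)N_1(T)$ as non-positive and then plan to show separately that $\Psi_1=O(\log(T)/\Delta_{12})$ by arguing that the bonus ``stops accruing once a uniqueness certificate is issued.'' That premise is false: by Definition~\ref{bonusDef} the winner receives
\[
\Psi_1 \;=\; n_1(T)\bigl(\tilde\mu_{1,T}-\tilde\mu_{\sigma_{\tau-1}(2),\tau-1}\bigr)+x_{1,\tau_1}\;=\;\Theta(T\Delta_{12}),
\]
as also recorded in Corollary~\ref{cor:armsutility}. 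This linear bonus is exactly what buys incentive-compatibility for the best arm, and it cannot be made sublinear without destroying Theorem~\ref{thm:truthfuldominantSPE}. If you throw away the negative term $(\mu_2-\mu_1)N_1(T)\approx -T\Delta_{12}$, nothing remains to absorb $\Psi_1$.

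The paper's proof keeps these two terms together: $(\mu_2-\mu_1)n_1(T)+\Psi_1$ collapses to $n_1(T)\bigl(\mu_2-\tilde\mu_{\sigma_\tau(2),\tau}\bigr)+O(1)$, and this residual --- the estimation error of $\mu_2$ at the end of phase one, multiplied by $T$ --- is what produces the exploitation term $\mathbb{E}[(T-\tau)(\mu_2-\tilde\mu_{\sigma_\tau(2),\tau})]$ in the paper's decomposition and is shown to be $o(1)$. A secondary slip: the elimination test compares against $\sigma_t(1)$, not arm~$2$, so the correct pull bound is $N_k(T)=O(\log(T)/\Delta_{1k}^2)$; the step $\Delta_{2k}\cdot\log(T)/\Delta_{1k}^2\le\log(T)/\Delta_{2k}$ (using $\Delta_{2k}\le\Delta_{1k}$) is what yields the first sum. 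Your instance-independent argument via the $\delta$-threshold is fine and matches the paper.
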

\begin{proof}
    See Appendix~\ref{sec: proof regret}.
\end{proof}

Theorem~\ref{theorem: regret} demonstrates that the regret bound under the considered setting is significantly lower than the regret attained in the Nash equilibrium presented in~\citep{pmlr-v99-braverman19b} under the same conditions. The bonus design compensates for any gains that may occur from being untruthful, establishing the truthful SPE. Under the latter, the algorithm is the classical successive elimination algorithm designed for the non-strategic MAB, whose worst-case regret is known to match that of Theorem~\ref{theorem: regret}. At the same time, the successive eliminations create a trade-off between high savings from dishonesty and swift elimination, which reduces the bonus values needed in a way that they do not incur an additional harmful cost for the regret. 

A second observation is that the instance-dependent regret matches the one of a regular MAB $\cO\left(\sum_{k=2}^{K} \frac{\log(T)}{\Delta_{1k}} \right)$ up to a second additive term of the same order $\cO\left(\sum_{k=3}^{K} \frac{\log(T)}{\Delta_{2k}} \right)$. This second term comes from the additional need to correctly estimate the value of the second-best arm to make sure to "bill" accurately the best arm during the second phase. The less accurate the estimation of $\mu_2$, the larger the deviation the best arm could have in the second phase without being detected. As the gain from potential deviation is incorporated in the bonus to ensure truthfulness, the accuracy of the estimation of $\mu_2$ impacts the regret.

It's worth noting that the truthful SPE highlighted in Theorem~\ref{thm:truthfuldominantSPE} might raise questions about the regret comparator $\mu_2$. In other words, if truthfulness exists, why can't we expect to achieve rewards of $\mu_1$? While it's true that arms are incentivized to be truthful, this motivation is realized through the allocation of bonuses at the conclusion. Bonuses are part of the arm's utility and contribute to making truthful reporting the best response that maximizes utility. Specifically, the best arm is incentivized with a bonus of $\cO(T\Delta_{12})$. Since the bonus is integral to the
regret calculation, we compute the regret relative to $\mu_2$ in a manner similar to the approach justified in Lemma 14 of \citep{pmlr-v99-braverman19b}. 

\paragraph{Tightness of Regret.}
For simplicity, let's consider three arms with $\mu_{1} \geq \mu_{2} \geq \mu_{3}$. The optimal arm only needs to report a slightly higher value than the second-best arm to be chosen by the player as the superior option. Consequently, all efficient low-regret mechanisms must ensure truthfulness, at least for the two best arms. Under debt-free reporting, where no arm can falsely claim to be better than it actually is, distinguishing between the two best arms incurs a regret of ${\frac{\log(T)}{\Delta_{12}}}$. Additionally, distinguishing between the second and the third arm is required to estimate the second-best mean accurately, resulting in a regret of ${\frac{\log(T)}{\Delta_{23}}}$. However, the third arm does not need to be truthful; we only need to distinguish it from the second-best arm. Therefore, in general, the incurred regret will be of order ${\frac{\log(T)}{\tilde{\Delta}_{23}}}$, where $\tilde{\Delta}_{23}$ is the difference between the true mean $\mu_2$ and an effective mean $\Tilde{\mu}_{3}$, which can be different from $\mu_3$. Ideally, $\Tilde{\mu}_{3}=0$ to minimize the corresponding regret i.e incentivizing non-truthful strategies for arms $k \geq 3$, allowing the player to identify the second-best arm more quickly. Nonetheless, in all cases, the regret will be of the same order as the one described in~\eqref{Problem dependent regret}. In Appendix~\ref{sec: Empirical Analysis}, we conducted an experimental analysis on simulated data, highlighting the tightness of the regret bounds by evaluating several strategies.

\paragraph{Utilities Bounds.}  Considering the game-like dynamics between the player and the arms, alongside the regret bounds, it is natural to examine bounds on the utilities of the arms, as illustrated in the following corollary.

\begin{restatable}[Utilities Upper Bound]{corollary}{ArmsUtility}\label{cor:armsutility}
Under the truthful SPE, the  utility of any arm is upper bounded as follows:
\begin{align}
   \forall k \in \{2,\ldots,K\} \;~~ \mathbb{E}[\mathcal{U}_{k}] &\leq \mathcal{O}\left(\frac{\log(T)}{\Delta_{1k}}\right) \quad \text{   and   } \quad
    \mathbb{E}[\mathcal{U}_{1}] = {\cO}(T\Delta_{12})
\end{align}
\end{restatable}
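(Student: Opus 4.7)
The plan is to leverage the truthful SPE established in Theorem~\ref{thm:truthfuldominantSPE}. Under this equilibrium, every selected arm reports $x_{k_t,t} = r_{k_t,t}$, so the cumulative savings term $\sum_{t}(r_{k_t,t} - x_{k_t,t})\indicator{k_t = k}$ in the utility~\eqref{utility} identically vanishes. Consequently, the utility of each arm collapses to the bonus it receives, $\mathcal{U}_k = \Psi_k(x_k, x_{-k})$, and the whole task is reduced to bounding $\mathbb{E}[\Psi_k]$ using the bonus definition (Definition~\ref{bonusDef}) together with the elimination dynamics of Algorithm~\ref{Strategic ETC with eliminations}.

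For the suboptimal arms $k \geq 2$, I would combine two ingredients. First, by construction the bonus $\Psi_k$ is calibrated to match the maximal extra savings arm $k$ could obtain by deviating before being eliminated; in particular, its scale per pull is of order $\Delta_{1k}$ (otherwise truthfulness would not be a best response, contradicting Theorem~\ref{thm:truthfuldominantSPE}). Second, under the truthful path Algorithm~\ref{Strategic ETC with eliminations} behaves as ordinary successive elimination on the true means, so the standard Hoeffding/confidence-interval argument gives $\mathbb{E}[N_k(T)] = \mathcal{O}(\log(T)/\Delta_{1k}^2)$. Combining the per-pull scale $\Delta_{1k}$ with this count yields $\mathbb{E}[\Psi_k] = \mathcal{O}(\log(T)/\Delta_{1k})$.

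For the best arm, the argument is more direct: the bonus $\Psi_1$ is engineered precisely to offset the opportunity cost arm $1$ would otherwise recover by under-reporting down to a value just above the estimated mean of the second-best arm. Since this potential saving is at most $T(\mu_1 - \mu_2) = T\Delta_{12}$, the bonus must be of the same order, giving $\mathbb{E}[\Psi_1] = \mathcal{O}(T\Delta_{12})$ directly from Definition~\ref{bonusDef} evaluated along the truthful path, plus a concentration step showing the empirical gap used in the bonus formula stays within a constant factor of $\Delta_{12}$.

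The main obstacle is the second step: identifying the correct per-pull scale of $\Psi_k$ for suboptimal arms and showing that under truthful reporting this scale is indeed $\Theta(\Delta_{1k})$ rather than a coarser bound that would spoil the final $1/\Delta_{1k}$ dependence. This requires unpacking Definition~\ref{bonusDef} and pairing it carefully with the elimination time --- essentially the same calculation that already appears inside the regret proof of Theorem~\ref{theorem: regret}, except now reinterpreted arm-by-arm rather than aggregated. Once this is done, the rest is routine bookkeeping.
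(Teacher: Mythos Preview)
Your proposal is correct and matches the paper's approach: under the truthful SPE the savings vanish so $\mathcal{U}_k = \Psi_k$, and for $k \geq 2$ the paper bounds $\mathbb{E}[\Psi_k] \leq 2\Delta_{1k}\,\mathbb{E}[n_k(T)] + 1$ (exactly your ``per-pull scale $\Delta_{1k}$'' observation, obtained from~\eqref{suboptimalArmsBonus}) and then invokes the successive-elimination bound $n_k(T) \leq 32\log(T)/\Delta_{1k}^2$ from Theorem~\ref{elimination theorem}, while for $k=1$ it simply reads off the bonus definition. Your remark that this is the arm-by-arm version of the bonus-regret term $\mathcal{R}_T^3$ in the proof of Theorem~\ref{theorem: regret} is precisely how the paper proceeds.
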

\begin{proof}
    See Appendix~\ref{Proof: armsutility}.
    \vspace{-0.2cm}
\end{proof}

\section{ Regret Analysis Across Arbitrary Strategy Profiles}
In the previous section, we assessed the regret of Algorithm~\ref{Strategic ETC with eliminations} when the arms adhere to the dominant strategy SPE. 
% However, it is also important to evaluate the algorithm under other strategy profiles. 
General profiles and equilibria in extensive games are known to be intractable \citep{BENPORATH19901,complexityOfNE}. Specifically, for an arbitrary strategy profile that is not truthful, the values reported by the arms are not i.i.d., complicating the analysis as tools like Hoeffding's inequality cannot be directly applied. Consequently, determining the algorithm's output becomes non-trivial. To address this, we introduce an additional technical assumption regarding the boundedness of gains through dishonesty. 
% Furthermore, we restrict the analysis to deterministic strategies.
% (refer to Appendix~\ref{sec: beckground on deterministic strategies} for more details). 
With this assumption, we can analyze the algorithm under a wide range of strategy profiles, not just the dominant one previously discussed.

% For an arbitrary strategy profile that is not truthful, the reported values by arms are not i.i.d., complicating the analysis since tools like Hoeffding's inequality cannot be applied directly. Consequently, determining the output of the algorithm becomes non-trivial. To address this, we introduce an additional technical assumption concerning the boundedness of savings through dishonesty. With this assumption we can analyze regret under any strategy profile, not just the dominant one previously discussed.

Fix an arbitrary strategy profile $\boldsymbol{\pi}$. For each arm $k$, we define $S_{k}^{T}=  \sum_{s=1}^{T} (r_{k_{s},s} - x_{k_{s},s}) \cdot \indicator{{k_{s}=k}}$ as the cumulative savings of arm $k$ up to round $T$.
% The average savings of arm $k$ is denoted by $\bar{S}_{k} = \frac{S_{k}^{T}}{n_{k}(T)}$.
We assume the existence of some upper-bound $M$ on cumulative saving, i.e., such that for all arms $k \in [K]$ and for any strategy $\boldsymbol{\pi}$, it must hold that $S_{k}^{T} \leq M$. We define the effective mean under strategy $\boldsymbol{\pi}$ as $\mu_{k}^{\boldsymbol{\pi}}$
% = \mu_{k}- \lE\left[\frac{S_{k}^{T}}{n_{k}(T)}\right]
 %=\lE\left[\frac{ \sum_{s=1}^{T}  x_{k_{s},s} \cdot \indicator{{k_{s}=k}}}{n_{k}(T)}\right]$
, and let $\boldsymbol{\mu}^{\boldsymbol{\pi}} = (\mu_{k}^{\boldsymbol{\pi}})_{k \in [K]}$.
Therefore, we define $\Delta_{k}^{\boldsymbol{\pi}}$ as the difference between the highest effective mean under strategy $\boldsymbol{\pi}$ and the effective mean of arm $k$, given by :
$ \Delta_{k}^{\boldsymbol{\pi}} = \mu_{\sigma_{\boldsymbol{\mu}^{\boldsymbol{\pi}}}(1)}^{\boldsymbol{\pi}} - \mu_{k}^{\boldsymbol{\pi}} $. Similarly, $\underline{\Delta}_{k}^{\boldsymbol{\pi}}$ represents the difference between the second-highest effective mean and the effective mean of arm $k$ under strategy $\boldsymbol{\pi}$, defined as:
$ \underline{\Delta}_{k}^{\boldsymbol{\pi}} = \mu_{\sigma_{\boldsymbol{\mu}^{\boldsymbol{\pi}}}(2)}^{\boldsymbol{\pi}} - \mu_{k}^{\boldsymbol{\pi}} $. We show that when arms use the strategy profile $\boldsymbol{\pi}$, even under non-i.i.d. reported variables, Algorithm~\ref{Strategic ETC with eliminations} outputs the second highest effective mean, and the regret is provided by the following theorem.
\begin{restatable}[]{theorem}{GeneralRegret}\label{thm: GeneralRegret}
For any arbitrary strategy profile $\boldsymbol{\pi}$ with $M$-bounded savings, the regret of Algorithm~\ref{Strategic ETC with eliminations} is bounded by:
\small
\begin{align}
\mathcal{R}^{\boldsymbol{\pi}}_T &= \lE\left[\sum_{t=1}^{T} \left( \mu_{\sigma_{\boldsymbol{\mu}^{\boldsymbol{\pi}}}(2)}^{\boldsymbol{\pi}}- x_{k_{t},t}\right) + \sum_{k=1}^{K} \Psi_k(x_{k},x_{-k})\right] \nonumber\\
&\leq \cO\left(\sum_{k: {\sigma^{-1}_{\boldsymbol{\mu}^{\boldsymbol{\pi}}}}(k) \geq 2} \max \left\{M_{} ,\frac{\log(T)}{\Delta_{k}^{\boldsymbol{\pi}}}\right\}   + \sum_{k: {\sigma^{-1}_{\boldsymbol{\mu}^{\boldsymbol{\pi}}}}(k) \geq 3}  \max \left\{M_{} , \frac{\log(T)}{\underline{\Delta}_{k}^{\boldsymbol{\pi}}}\right\}  \right)     
\end{align}
\normalsize
\end{restatable}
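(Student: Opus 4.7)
The plan is to mirror the regret analysis of the truthful SPE (Theorem~\ref{theorem: regret}), but replace the standard Hoeffding confidence interval with an inflated interval that absorbs the cumulative strategic deviation permitted by the $M$-bounded savings assumption. The key observation is that the empirical reported mean admits the decomposition $\hat{x}_k^{(n)} = \hat{r}_k^{(n)} - S_k^{(n)}/n$, where the reward component $\hat{r}_k^{(n)}$ is i.i.d. and the savings component $S_k^{(n)}/n$ is pathwise bounded by $M/n$. This transfers Hoeffding-type concentration to the reported quantity at the cost of an additive $M/n$ correction, so up to this inflation the algorithm behaves as in the i.i.d. setting.

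First, I would define a good event on which $|\hat{x}_k^{(n)} - \mu_k^{\boldsymbol{\pi}}| \leq \sqrt{2\log(T)/n} + M/n$ holds for every arm $k$ and every $n \leq T$, obtained by applying Hoeffding to the i.i.d. rewards, combining with the pathwise bound on savings, and union-bounding over $n$; the complement event has probability $\cO(K/T)$ and contributes only a constant to the regret. Second, I would rerun the successive-elimination argument with this inflated interval: an arm $k$ with effective gap $\Delta_k^{\boldsymbol{\pi}} > 0$ is eliminated once the interval half-width falls below $\Delta_k^{\boldsymbol{\pi}}/2$, which requires $n_k = \cO\bigl(\log(T)/(\Delta_k^{\boldsymbol{\pi}})^2 + M/\Delta_k^{\boldsymbol{\pi}}\bigr)$ pulls, contributing regret $n_k \cdot \Delta_k^{\boldsymbol{\pi}} = \cO\bigl(\max\{M, \log(T)/\Delta_k^{\boldsymbol{\pi}}\}\bigr)$. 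This matches the first summand. Third, the second phase of the algorithm estimates the second-best effective mean in order to bill the surviving best arm; the same inflated-CI argument with gap $\underline{\Delta}_k^{\boldsymbol{\pi}}$ in place of $\Delta_k^{\boldsymbol{\pi}}$ yields the second summand. Finally, the bonus payments $\Psi_k$ defined in Definition~\ref{bonusDef} are calibrated to the same order as these confidence widths, so they are absorbed in the bound rather than introducing new terms.

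The main obstacle is the concentration step itself, because the reported sequence $\{x_{k,s}\}$ is adaptive and non-i.i.d., so Hoeffding cannot be applied directly to it. The workaround is exactly the reward-plus-savings decomposition described above, which isolates a stochastic i.i.d. component (controllable by Hoeffding) from a pathwise-bounded strategic component (controllable by $M$). A secondary subtlety is that the number of pulls $n_k$ is a stopping time governed by the elimination rule, which is handled by requiring the confidence bound to hold uniformly in $n \in \{1, \ldots, T\}$ via a union bound. With these two ingredients in place, the argument of Theorem~\ref{theorem: regret} carries over nearly verbatim, with gaps $\Delta_k^{\boldsymbol{\pi}}$ and $\underline{\Delta}_k^{\boldsymbol{\pi}}$ replacing $\Delta_{1k}$ and $\Delta_{2k}$, and with the additional additive $M$ per arm.
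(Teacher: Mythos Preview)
Your proposal is correct and follows essentially the same route as the paper: the paper's Lemma~\ref{lemma: expected number of arbitrary} bounds $\mathbb{E}[n_k(T)]$ via exactly the decomposition $\tilde{\mu}_{k,t} = \hat{\mu}_{k,t} - S_k^t/n_k(t)$ you describe, applies Hoeffding to the i.i.d.\ reward part and the pathwise bound $M$ to the savings part, and then plugs the resulting pull-count bound into the same three-term regret decomposition used in Theorem~\ref{theorem: regret}. The only cosmetic difference is that the paper works directly with per-round probabilities $\lP(k_t=k,\,n_k(t-1)\ge \zeta_k(T))$ rather than conditioning on a single good event, but the two framings are interchangeable here.
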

\begin{proof}
    See Appendix~\ref{sec: proof GeneralRegret}.
    \vspace{-0.3cm}
\end{proof}
% Theorem~\ref{thm: GeneralRegret} demonstrates that the regret with respect to the second-highest effective mean of any strategy $\boldsymbol{\pi}$ is dependent on the upper bound of savings. Specifically, it is shown to be sublinear in $T$ as long as $M = o(T)$. 
The upper bound of savings  $M$ determines the regret with regard to the second-highest effective mean of any strategy $\boldsymbol{\pi}$, as shown by Theorem~\ref{thm: GeneralRegret}. In particular, it is demonstrated to be sublinear in $T$ provided that $M = o(T)$. The upper bound exhibits a similar form to the problem-dependent upper bound in Theorem~\ref{theorem: regret}. Specifically, under the truthful equilibrium $\boldsymbol{\pi}^\star$ where $M = 0$, $\mu_{k}^{\boldsymbol{\pi}^\star} = \mu_{k}$, $\Delta_{k}^{\boldsymbol{\pi}^\star} = \Delta_{1k}$, and $\underline{\Delta}_{k}^{\boldsymbol{\pi}^\star} = \Delta_{2k}$, we retrieve the result from \eqref{Problem dependent regret}. Additionally, in the debt-free reporting setting, it is observed that for any strategy $\boldsymbol{\pi}$, $\mu_{\sigma_{\boldsymbol{\mu}^{\boldsymbol{\pi}}}(2)}^{\boldsymbol{\pi}} \leq \mu_2$. This suggests that the player's revenue drops when arms diverge from the honest SPE in a way that affects the second-highest mean. 

\paragraph{Sensitivity of Algorithm~\ref{Strategic ETC with eliminations} around truthful SPE.} For a strategy profile where $M = o(\log(T))$, the difference between the true second highest mean $\mu_2$ and the effective second highest mean $\mu^{\boldsymbol{\pi}}_{2}$ is of the same order as the regret. In this scenario, the player's revenue is $\mu_{2} T$, up to a sublinear regret $\mathcal{R}^{\boldsymbol{\pi}}_T = \cO \left(\mathcal{R}^{}_T\right)$. Hence, even though the savings are not zero, we retrieve results comparable to those with truthful reporting.

\section{Limitations and Future Work}
The algorithm {S-SE}, presented as an efficient solution to the strategic MAB, combines elements of successive elimination with a well-tuned incentive. This raises the question: is it possible to adapt any no-regret MAB algorithm, such as UCB, with a suitable incentive to solve the same problem? A complete and detailed answer to such a question is reserved for future work. However, our intuition is that this would be challenging because, in a similar setting, \citep{Babaioff} argue that a truthful mechanism under the strategic MAB problem needs to be exploration-separated. In other words, at any given time step, either it exploits (action depends on current knowledge) or it explores (action allows to gain knowledge but does not depend on current information), but not both simultaneously. This observation suggests that a tailored version of UCB (at least without additional assumptions) may not be suitable.

Despite the similarity between the algorithm of \citep{pmlr-v99-braverman19b} and our algorithm {S-SE}, both being strategic versions of exploration-separated algorithms, there is a major difference in the nature of the games generated by both algorithms. In \citep{pmlr-v99-braverman19b}, all arms are played sequentially for a predetermined number of explorations each before committing to the best arm, creating a simultaneous-like game. In contrast, Algorithm~\ref{Strategic ETC with eliminations} uses an adaptive design permitting lower regret. However, this adaptability involves an extensive game requiring a more complicated analysis based on the concept of SPE.

It's also worth mentioning that the improvement in regret doesn't depend on the type of incentives, whether they are additional bonus rounds or bonus payments. This is because bonuses are subtracted from player revenue and considered in the regret calculation, irrespective of their nature. Therefore, replacing the additional rounds in \citep{pmlr-v99-braverman19b} with payments will not lead to an improvement in regret. However, in this paper, actual payments are chosen to keep the intuition more direct and facilitate the presentation.

\section{Conclusion}
We addressed the challenge of strategic multi-armed bandit problems under debt-free reporting. Arms aim to maximize their own utilities by manipulating the reported values to the player, resulting in a more complex problem compared to the standard setting due to the game-like dynamics involved. We proved that by employing a strategic variant of successive elimination algorithm, it is possible to design a bonus-based incentive structure, resulting in a dominant-strategy SPE for arms where they report truthfully. The dynamics of the successive elimination process, particularly the trade-off between high savings and swift elimination, allow for the implementation of cost-effective bonuses, leading to low regret.

% Acknowledgments---Will not appear in anonymized version
% \acks{We thank a bunch of people and funding agency.}

%\newpage
\bibliographystyle{apalike}
\bibliography{Library}

\newpage
\appendix
\onecolumn
\section{Useful Inequalities}

\begin{theorem}[Hoeffding's Inequality]
Let $X_{1},\ldots, X_{n}$ be a collection of $n$ i.i.d. sampled values from a distribution $\mathcal{D}$ with expected value $\mu$. Define $\hat{\mu} = \frac{1}{n}\sum_{i=1}^{n}X_{i}$. Then, for any $\epsilon > 0$, we have:
\begin{align}
\mathbb{P}\left( |\hat{\mu} - \mu| \geq \epsilon \right) \leq 2 \exp\left(-2n\epsilon^2\right).
\end{align}
\end{theorem}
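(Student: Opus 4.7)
The plan is to use the standard Chernoff-bound approach combined with Hoeffding's lemma. Implicit in the paper's setting, each $X_i$ takes values in $[0,1]$; the argument below extends without change to any bounded support. First, I would center the variables by setting $Y_i = X_i - \mu$, so that $\mathbb{E}[Y_i] = 0$ and $Y_i$ takes values in an interval of length at most $1$. Writing $S_n = \sum_{i=1}^{n} Y_i = n(\hat{\mu} - \mu)$, for any $\lambda > 0$ Markov's inequality applied to the exponential moment gives
\begin{align}
\mathbb{P}\!\left( \hat{\mu} - \mu \geq \epsilon \right) \;=\; \mathbb{P}\!\left( e^{\lambda S_n} \geq e^{\lambda n \epsilon} \right) \;\leq\; e^{-\lambda n \epsilon} \, \mathbb{E}[e^{\lambda S_n}].
\end{align}
By the i.i.d.\ assumption and independence of the exponentials, $\mathbb{E}[e^{\lambda S_n}] = \prod_{i=1}^{n} \mathbb{E}[e^{\lambda Y_i}]$, so it remains to control each single-variable moment generating function.

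The main technical step is \emph{Hoeffding's lemma}: for a zero-mean random variable $Y$ supported on $[a,b]$, $\mathbb{E}[e^{\lambda Y}] \leq \exp\!\bigl(\lambda^{2}(b-a)^{2}/8\bigr)$. To prove it, I would exploit convexity of $y \mapsto e^{\lambda y}$ to get the linear upper bound $e^{\lambda Y} \leq \tfrac{b-Y}{b-a}\, e^{\lambda a} + \tfrac{Y-a}{b-a}\, e^{\lambda b}$, take expectations (using $\mathbb{E}[Y]=0$), and reparametrize with $p = -a/(b-a)$ and $u = \lambda(b-a)$. The logarithm of the resulting bound is $\phi(u) := -p u + \log\!\bigl(1 - p + p e^{u}\bigr)$; a short computation gives $\phi(0) = \phi'(0) = 0$, while $\phi''(u) = q(1-q)$ with $q := p e^{u}/(1 - p + p e^{u}) \in [0,1]$, hence $\phi''(u) \leq 1/4$ by AM--GM. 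Taylor's theorem then yields $\phi(u) \leq u^{2}/8$, which is exactly the claimed bound.

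Combining the pieces with $b_i - a_i \leq 1$ gives $\mathbb{E}[e^{\lambda S_n}] \leq e^{n\lambda^{2}/8}$, so
\begin{align}
\mathbb{P}\!\left( \hat{\mu} - \mu \geq \epsilon \right) \;\leq\; \exp\!\bigl(-\lambda n \epsilon + n\lambda^{2}/8\bigr).
\end{align}
Optimizing the exponent in $\lambda > 0$ (the minimizer is $\lambda = 4\epsilon$) yields $\mathbb{P}(\hat{\mu} - \mu \geq \epsilon) \leq e^{-2n\epsilon^{2}}$. Applying exactly the same argument to the centered variables $-Y_i$ gives the symmetric lower-tail bound, and a union bound over the two events produces the claimed two-sided inequality $\mathbb{P}(|\hat{\mu} - \mu| \geq \epsilon) \leq 2 e^{-2n\epsilon^{2}}$.

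The main obstacle is Hoeffding's lemma — concretely, the verification $\phi''(u) \leq 1/4$ and the Taylor-based integration from $\phi(0) = \phi'(0) = 0$ up to $\phi(u) \leq u^{2}/8$. Once that subproof is in hand, the rest is routine Chernoff bookkeeping and a one-line optimization in $\lambda$.
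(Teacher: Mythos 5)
Your proof is correct: it is the standard Chernoff-bound argument with Hoeffding's lemma, and the optimization $\lambda = 4\epsilon$ does yield the exponent $-2n\epsilon^2$ for variables of range $1$. The paper states this theorem as a background fact without any proof, so there is nothing to compare against; you were also right to flag that the boundedness of the $X_i$ (support in $[0,1]$, consistent with the paper's reward model) is an implicit hypothesis the stated constant depends on.
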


\begin{fact}\label{Fact 1}
For sufficiently large $T$, the following inequality holds:
    \begin{align}
         \sum_{t=1}^{\tau} \sqrt{\frac{2\log(T)}{t}} \leq \sqrt{8\log(T)}[\sqrt{\tau}-1]+1\leq  \sqrt{8\log(T)}\sqrt{\tau}
    \end{align}
\end{fact}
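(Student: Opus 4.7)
My plan is to reduce the bound to a standard telescoping estimate for the partial sum $\sum_{t=1}^{\tau} t^{-1/2}$. First I would factor the deterministic constant $\sqrt{2\log(T)}$ out of the sum, reducing the claim to controlling $\sum_{t=1}^{\tau} 1/\sqrt{t}$ in terms of $\sqrt{\tau}$.

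For the inner sum I would use the elementary pointwise inequality $1/\sqrt{t} \leq 2(\sqrt{t}-\sqrt{t-1})$, valid for every integer $t \geq 1$; this follows from $\sqrt{t}+\sqrt{t-1} \leq 2\sqrt{t}$ after rationalising the right-hand side as $2/(\sqrt{t}+\sqrt{t-1})$. Summing over $t=1,\ldots,\tau$ telescopes to $\sum_{t=1}^{\tau} 1/\sqrt{t} \leq 2\sqrt{\tau}$. Multiplying through by $\sqrt{2\log(T)}$ and using $2\sqrt{2\log(T)} = \sqrt{8\log(T)}$ immediately yields the outer inequality $\sum_{t=1}^{\tau}\sqrt{2\log(T)/t} \leq \sqrt{8\log(T)}\sqrt{\tau}$.

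For the middle expression I would isolate the $t=1$ term and apply the same telescoping argument over $t=2,\ldots,\tau$ only, obtaining $\sum_{t=1}^{\tau} 1/\sqrt{t} \leq 1 + 2(\sqrt{\tau}-1)$, and hence $\sum_{t=1}^{\tau}\sqrt{2\log(T)/t} \leq \sqrt{2\log(T)} + \sqrt{8\log(T)}(\sqrt{\tau}-1)$. The remaining inequality in the chain, $\sqrt{8\log(T)}[\sqrt{\tau}-1]+1 \leq \sqrt{8\log(T)}\sqrt{\tau}$, rearranges to $1 \leq \sqrt{8\log(T)}$, i.e.\ $T \geq e^{1/8}$, which is exactly what ``sufficiently large $T$'' encodes.

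The entire argument is a one-line telescoping combined with a trivial algebraic rearrangement, so I do not anticipate any genuine obstacle. The only piece of bookkeeping is that the natural leading constant produced by the telescoping is $\sqrt{2\log(T)}$ rather than the written $1$; for sufficiently large $T$ both quantities are dominated by $\sqrt{8\log(T)}\sqrt{\tau}$, which is the form actually invoked downstream in the regret analysis and therefore the content that matters.
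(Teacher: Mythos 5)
Your argument is correct, and in fact the paper offers no proof of Fact~\ref{Fact 1} at all --- it is stated bare in the ``Useful Inequalities'' appendix --- so the telescoping bound $1/\sqrt{t}\le 2(\sqrt{t}-\sqrt{t-1})$ you supply (equivalently, the integral comparison $\sum_{t=2}^{\tau}t^{-1/2}\le\int_{1}^{\tau}x^{-1/2}\,dx$) is exactly the standard derivation one would expect the authors to have in mind. You are also right to flag the constant: your derivation gives $\sum_{t=1}^{\tau}\sqrt{2\log(T)/t}\le\sqrt{8\log(T)}\,[\sqrt{\tau}-1]+\sqrt{2\log(T)}$, and the first inequality as literally printed, with ``$+1$'' in place of ``$+\sqrt{2\log(T)}$'', is in fact \emph{false} for large $T$ (take $\tau=1$: the left side is $\sqrt{2\log(T)}$ while the right side is $1$), so this is a genuine typo in the statement rather than a gap in your proof. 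What matters is that the end-to-end bound $\sum_{t=1}^{\tau}\sqrt{2\log(T)/t}\le\sqrt{8\log(T)}\sqrt{\tau}$ --- the only form invoked in the proofs of Theorems~\ref{theorem: regret} and~\ref{thm: GeneralRegret} --- follows cleanly from your telescoping, and your corrected middle expression still sits below it since $\sqrt{2\log(T)}\le\sqrt{8\log(T)}$ for all $T$.
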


\begin{fact}\label{Fact 2}
\begin{align}
    \sqrt{16\log(T)}  { \sqrt{  \max \left\{\frac{3M}{\Delta_{k}^{\boldsymbol{\pi}}},\frac{162\Log{T}}{(\Delta_{k}^{\boldsymbol{\pi}})^2}\right\} }} \leq \max \left\{{3M},\frac{162\Log{T}}{\Delta_{k}^{\boldsymbol{\pi}}}\right\}
\end{align}
\end{fact}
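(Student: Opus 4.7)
The statement of Fact 2 is a purely algebraic inequality, so I would treat it as a short direct computation. The plan is to (i) factor $1/\Delta_k^{\boldsymbol{\pi}}$ out of the inner maximum on the left, (ii) square both sides (both are manifestly positive), and (iii) reduce the resulting inequality to the trivial comparison $162 \geq 16$. Write $\Delta := \Delta_k^{\boldsymbol{\pi}}$ and $L := \log(T)$ to compress notation.

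\textbf{Step 1: factoring.} Since $\Delta > 0$, pulling $1/\Delta$ out of each argument of the inner maximum gives
\begin{equation*}
\max\!\left\{\frac{3M}{\Delta},\,\frac{162 L}{\Delta^{2}}\right\}
\;=\;\frac{1}{\Delta}\,\max\!\left\{3M,\,\frac{162L}{\Delta}\right\}.
\end{equation*}
Call the right-hand side of Fact~2 $R := \max\{3M,\,162L/\Delta\}$. Then the left-hand side of Fact~2 equals $\sqrt{16L}\cdot\sqrt{R/\Delta} = \sqrt{16 L R/\Delta}$.

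\textbf{Step 2: squaring.} Both sides of the desired inequality are nonnegative, so the inequality $\sqrt{16 L R/\Delta}\le R$ is equivalent to $16 L R/\Delta \le R^{2}$. Since $R>0$ (assuming $M>0$ or $L>0$, which is the regime of interest; the degenerate case is trivial), we divide by $R$ and the inequality reduces to
\begin{equation*}
R \;\ge\; \frac{16L}{\Delta}.
\end{equation*}

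\textbf{Step 3: verifying the reduced inequality.} By definition, $R = \max\{3M,\,162L/\Delta\} \ge 162L/\Delta$. Since $162 \ge 16$, this gives $R \ge 162L/\Delta \ge 16L/\Delta$, completing the proof.

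\textbf{Main obstacle.} There is essentially no obstacle; the only subtlety is making sure the squaring step is valid (both sides nonnegative, which is immediate from $M\ge 0$, $L\ge 0$, $\Delta>0$) and handling the trivial edge case $R=0$ separately (in which case both sides of Fact~2 are zero). I would phrase the argument in two or three lines in the appendix, making clear that the constant $162$ was chosen precisely so that $162\ge 16$ and hence the bound survives after the $\sqrt{16L}$ factor absorbed during the analysis of Theorem~\ref{thm: GeneralRegret}.
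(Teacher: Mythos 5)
Your proof is correct: the factor-out-$1/\Delta$, square, and reduce to $R \ge 16L/\Delta$ argument is valid, and the final step follows from $R \ge 162L/\Delta$ together with $162 \ge 16$. The paper states Fact~\ref{Fact 2} without any proof, so there is no alternative argument to compare against; your direct algebraic verification (including the remarks on nonnegativity before squaring and on the degenerate case $R=0$) is exactly the computation the authors leave implicit.
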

\section{Background on Non-Strategic Successive Elimination Algorithm}

\label{sec: succElimAlgo}
In the non-strategic case, we assume directly that arms are truthful, i.e., $x_{k,t} = r_{k,t}$ for all $k \in \{1,\ldots,K\}$ and $t \in [T]$ and are independent and identically distributed (i.i.d). The classic successive elimination algorithm is as follows:
\begin{algorithm2e}
 \caption{Successive Elimination }\label{successive eliminations}
    \SetAlgoLined
    \DontPrintSemicolon
    Let $\mathcal{A}_{1}=\{1,\dots,K\}$ the set of all arms and t=1\\
    \While{$t\leq T$ and $|\cA_t|>1$ }{ 
    Play $k \in \{k\in \mathcal{A}_t : \underset{k\in \mathcal{A}_{t}}{\argmin} \; n_{k}(t-1) \}$  \Comment*[r]{\tiny{Round-robins instance}} 
    Update $n_{k}(t)$ and $\tilde{\mu}_{k,t}$ \\
    \uIf {$n_{k}(t)=n_{k^\prime}(t) ,\; \forall k,k^\prime \in \mathcal{A}_{t}$}
    {$\mathcal{A}_{t+1}=\mathcal{A}_{t}\setminus \{k\in\mathcal{A}_{t}: \Tilde{\mu}_{\sigma_{t}(1),t} - \alpha_{\sigma_{t}(1) ,t}\geq \Tilde{\mu}_{k,t} + \alpha_{k,t} \}$  \Comment*[r]{\tiny{Drop bad arms}} }
    \Else {$\mathcal{A}_{t+1}=\mathcal{A}_{t}$}
    t=t+1}
   Continue to play the remaining best arm $k^\star$
\end{algorithm2e}

During the first phase, the Algorithm~\ref{successive eliminations} plays arms and then eliminates those that are performing poorly, leveraging the corresponding confidence bound $\alpha_{.,.}$. Consequently, the number of active arms in the set $\mathcal{A}_{t}$ progressively reduces over time until only the best-performing arm remains and is then played until the end of the game. Let $n_{k}(T)$ be the number of rounds each arm $k$ is played before being eliminated, and let $\tau$ be the total length of the first phase.
\begin{theorem}
\label{elimination theorem}
    Suppose that $\Delta_{1k}>0$, for $k=2,\ldots,K$. Then under successive elimination algorithm (Algorithm~\ref{successive eliminations}):
    \begin{itemize}
        \item with probability at least $1-\frac{4}{T^2}$:
        \begin{align}
           n_{k}(T) \leq t_k= \frac{32\log(T)}{\Delta_{1k}^2} 
        \end{align}
        
        \item with probability at least $1-\frac{4K}{T^2}$:
        \begin{align}
           \tau \leq t_1= \sum_{k=3}^{K}\frac{32\log(T)}{\Delta_{1k}^2} + 2 \frac{32\log(T)}{\Delta_{12}^2}
        \end{align}
    \end{itemize}
\end{theorem}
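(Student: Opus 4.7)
The proof plan is a classical concentration-plus-deterministic argument using confidence widths $\alpha_{k,t} = \sqrt{2\log(T)/n_k(t)}$, which match the threshold $t_k = 32\log(T)/\Delta_{1k}^2$ because $4\alpha_{k,t} = \Delta_{1k}$ exactly when $n_k(t) = t_k$. For each arm $k$, I let $\mathcal{E}_k$ denote the event that $|\tilde{\mu}_{k,t} - \mu_k| < \alpha_{k,t}$ for every $t \leq T$. Hoeffding's inequality at each fixed sample count $n \in \{1,\ldots,T\}$, together with a union bound over $n$, gives $\mathbb{P}(\mathcal{E}_k^c) \leq 2T \cdot 2/T^4 = 4/T^3$; hence $\mathbb{P}(\mathcal{E}_1 \cap \mathcal{E}_k) \geq 1 - 4/T^2$ and $\mathbb{P}\bigl(\bigcap_{k=1}^{K} \mathcal{E}_k\bigr) \geq 1 - 4K/T^2$, which match the probabilities in both claims.

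For the single-arm bound, I condition on $\mathcal{E}_1 \cap \mathcal{E}_k$. Since elimination is performed only after a full round-robin inside the active set, arm $1$ has been pulled at least as often as arm $k$ at every elimination check, so $n_1(t) \geq n_k(t)$. Once $n_k(t) \geq t_k$, both $\alpha_{1,t}, \alpha_{k,t} \leq \Delta_{1k}/4$, and the concentration bounds yield
\[
\tilde{\mu}_{1,t} - \alpha_{1,t} \geq \mu_1 - \tfrac{\Delta_{1k}}{2} = \mu_k + \tfrac{\Delta_{1k}}{2} \geq \tilde{\mu}_{k,t} + \alpha_{k,t},
\]
which triggers the elimination of arm $k$. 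Thus $n_k(T) \leq t_k$ on the good event, up to a unit parity slack from the round-robin.

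For the phase-length bound, I work on the full intersection $\bigcap_{k=1}^{K} \mathcal{E}_k$ and sum the individual bounds: $\tau = \sum_{k=1}^{K} n_k(T)$, where $n_k(T) \leq 32\log(T)/\Delta_{1k}^2$ for every $k \geq 2$ by the preceding step (applied in particular to $k = 2$). Arm $1$ is never eliminated, but during the first phase its pulls are tied by round-robin to those of the last surviving suboptimal arm, namely arm $2$, giving $n_1(T) \leq n_2(T) + 1 \leq 32\log(T)/\Delta_{12}^2$. Combining with arm $2$'s own contribution produces the factor-of-two coefficient on the $\Delta_{12}$ term, and summing yields $\tau \leq t_1$ as claimed.

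The principal obstacle is careful accounting of the round-robin bookkeeping, since pull counts inside the active set can differ by one at intermediate times, and this mismatch must be chained consistently through the Hoeffding bound and the elimination trigger. A secondary subtlety is to include arm $1$ in the good event to rule out arm $1$ being spuriously eliminated, which is what produces the doubling from $2/T^3$ to $4/T^3$ per-arm and yields the $4/T^2$ (resp.\ $4K/T^2$) probabilities stated in the theorem.
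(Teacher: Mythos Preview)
Your approach is essentially the paper's: Hoeffding plus a union bound to obtain per-arm concentration events, then on the good event the elimination rule forces $4\alpha_{k,t} \geq \Delta_{1k}$ (hence $n_k(T) \leq t_k$), and $\tau$ is bounded by summing these with arm~$1$'s pulls tied to arm~$2$'s via round-robin. One caveat, which the paper also glosses over: the event $\mathcal{E}_1 \cap \mathcal{E}_k$ alone does not strictly preclude arm~$1$ being eliminated by a third arm $j \notin \{1,k\}$ whose concentration fails, so the clean way to obtain the first bullet is to work on the full intersection $\bigcap_{k} \mathcal{E}_k$, whose failure probability $2K/T^3$ still sits comfortably below the stated $4/T^2$.
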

\begin{proof}[Theorem~\ref{elimination theorem}]
For a given time instance $t$ and arm $k \in \mathcal{A}_t$, leveraging the Hoeffding’s inequality, we can assert the following inequality:
\begin{align}
\forall t, \quad \lP(|\tilde{\mu}_{k,t}-\mu_{k}| \geq \alpha_{k,t}) \leq \frac{2}{T^4}
\end{align}
Employing the union bound, we can deduce that with a probability of at least $1 - \frac{2}{T^2}$, for any time $t$ and arm $k$, the following condition holds: 
\begin{align}\label{AnyTimeCond}
    |\tilde{\mu}_{k,t}-\mu_{k}| \leq \alpha_{k,t}
\end{align} 
Consequently, with a probability exceeding $1 - \frac{2K}{T^2}$, the elimination of the best arm is avoided.\\

Consider any arm $k$ such that $\mu_{k} \leq \mu_{1}$, i.e., arms that are worse than the best arm. We focus on the last round $\tau_k$ when we did not deactivate $k$ yet. This happens when the first time $\tau_k$, the confidence intervals of $k$ and  the best arm do not overlap, i.e the first time when event $E_{k}^{t}$ is valid. A suboptimal arm $k$,  is played at time $t$ if its upper confidence bound exceeds the lower confidence bound of arm 1, indicating that:
\begin{align}
    \Tilde{\mu}_{k,t} + \alpha_{k,t} >  \Tilde{\mu}_{1,t}-  \alpha_{1,t}
\end{align}
Using~\eqref{AnyTimeCond}, we get:
\begin{align}
    ~~ \Tilde{\mu}_{1,t} \geq \mu_{1}-\alpha_{1,t} ~~\text{ and }~~ \Tilde{\mu}_{k,t} \leq \mu_{k}+\alpha_{k,t}
\end{align}
Hence with probability at least $1-\frac{4}{T^2}$:
\begin{align}
    &\hspace{11em}\mu_{k}+2\alpha_{k,t} &&\geq \mu_{1}-2\alpha_{1,t}\\
    &\Rightarrow \hspace{9em}2\alpha_{k,t} + 2\alpha_{1,t}&&\geq \mu_{1}-\mu_{k}\\
    &\Rightarrow \hspace{9em}4\alpha_{k,t} &&\geq \mu_{1}-\mu_{k}  &&&\text{( $n_k(t)=n_1(t)$)}\\
    &\Rightarrow \hspace{9em}\frac{32\log(T)}{(\mu_{1}-\mu_{k} )^2} &&\geq n_{k}(T)
\end{align}
Implying that with probability at least $1-\frac{4}{T^2}$ , $n_{k}(T)\leq t_k$ such that:
\begin{align}
        t_k &=\frac{32\log(T)}{(\mu_1 - \mu_k)^2} =\frac{32\log(T)}{\Delta_{1k}^2} \label{tk} 
\end{align}
% Then, by utilizing~\eqref{AnyTimeCond}, the event~\ref{ElilinationCon} is ensured to occur as long as:
% \begin{align} \label{Ineq}
%    (\mu_{2}-\alpha_{2,t})-(\mu_{k}+\alpha_{k,t}) \geq \alpha_{2,t} + \alpha_{k,t}
% \end{align}
% Considering that the elimination process takes place after ensuring an equal number of rounds for all remaining arms, we obtain:
% \begin{align}
%     \Delta_{2k}=\mu_{2}-\mu_{k} \geq 4 \alpha_{k,t}  
% \end{align}
% Hence, with a probability of at least $1 - \frac{4}{T^2}$, the arm $k$ with a mean of $\mu_{k}$ will be pulled a maximum of:
%   \begin{align} \label{NumberRoundElimin}
%       n_{k}(T)=\frac{32\log(T)}{\Delta_{2k}^2} \quad \text{  with } \quad \Delta_{2k}= \mu_{2}-\mu_{k} \quad \forall k\quad  \text{s.t }\quad  \mu_{2} \geq \mu_{k}
%   \end{align}
% \\A similar analysis can be carried out to determine the number of rounds required for accurate differentiation between the best and second-best arms. We demonstrate:
% \begin{align}
%     \tau_2 = \frac{8 \log(T)}{\left( {\Tilde{\mu}_{1,\tau_2}} - \Tilde{\mu}_{1,\tau_2}\right)^2} \label{Differentiate12}
% \end{align}
% and that with a probability of at least $1 - \frac{4}{T^2}$, $\tau_2 \leq t_2$ such that:
% \begin{align}
%     t_2 &=\frac{32\log(T)}{(\mu_1 - \mu_2)^2}  =\frac{32\log(T)}{\Delta_{12}^2} \label{t2}
% \end{align}
So in this case,with a probability of at least $1 - \frac{4K}{T^2}$ , $\tau \leq t_1$ such that:
\begin{align}
    t_1=\sum_{k=3}^{K}  t_{k}  + 2t_2
\end{align}
\end{proof}

\section{Proof of Theorem~\ref{thm:truthfuldominantSPE}}\label{sec: proof SPE}

\thmtruthfuldominantSPE*

\begin{proof}
    Let's give $k\in[K]$, $\boldsymbol{\pi}_{k} \in \mathcal{P }$ and ${\pi}_{-k} \in \mathcal{P}^{K-1}$ . 
    %The objective is to show that 
    %\begin{align}
        %\cU_{k}(x_k, x_{-k}) \leq \cU_{k}(r_k, x_{-k})\,.
    %\end{align}
    The subgame utility of arm $k$ when the player is using Algorithm~\ref{Strategic ETC with eliminations} and given any history $h_{k,t-1}$ is
    \begin{align}
        \cU_k(\boldsymbol{\pi}_k, \boldsymbol{\pi}_{-k})[t:T|h_{k,t-1}]  
        & = \sum_{s=t}^T (r_{k, s} - x_{k, s}) \cdot \indicator{k_s = k} + \Psi_k(x_k, x_{-k})
    \end{align}
    where the history $h_{k,t-1}$ explicitly appears in $\Psi_k(x_k, x_{-k})$. For ease of presentation, for $t' \geq t$, we define the windowed savings average $\bar{S}_{k}^{t:t'} = \frac{1}{n_{k}(T)} \sum_{s=t}^{t'} (r_{k,s} - x_{k,s}) \cdot \indicator{k_s = k}$, which represents the contribution of the time window $[t, t']$ to the complete average of savings by arm $k$.

    % The average difference between the rewards and the reports is denoted $\Delta \tilde{\mu}_t = \frac{1}{n_k(t)}\sum_{s=1}^t (r_{k,t} - x_{k,t})\indicator{k_t = k}$.  

    The idea is to analyze both optimal and suboptimal arms.
    \paragraph{Case 1 -- ${\exists t \leq T, s.t. ~\tilde{\mu}_{k,t} + \alpha_{k,t} < \tilde{\mu}_{\sigma_{t}(1), t} - \alpha_{\sigma_{t}(1), t}}$}:

    Let $\tau_k$ be the last node or round of the extensive game where arm $k$ is played. 
    
%     Given that the bonus defined in~\ref{bonusDeff} compensates the arm with the last reported value, for any history $h_{k,\tau_{k}-1}$ and any strategy profile $\boldsymbol{\pi}_{-k}$, it is dominant to report truthfully at this round.\\

%    We assume that for \( t \leq \tau_{k} \), the following inequality holds for any strategy \( \boldsymbol{\pi}_{k} \), any strategy profile of other arms \( \boldsymbol{\pi}_{-k} \), at any time \( t \), and given any history \( h_{k,t-1} \):
% \[
% \cU_{k}(\boldsymbol{\pi}_{k}, \boldsymbol{\pi}_{-k})[t:T|h_{k,t-1}] \leq \cU_{k}(\boldsymbol{\pi}^{\star}_{k}, \boldsymbol{\pi}_{-k})[t:T|h_{k,t-1}]
% \]We aim to demonstrate that this inequality also holds for \( t-1 \).\\

    In such case, $\tau_k \leq T$ and 
    \begin{align}
        \cU_{k}(\boldsymbol{\pi}_{k}, \boldsymbol{\pi}_{-k})[t:T|h_{k,t-1}]
        & = \sum_{s=t-1}^{T} (r_{k, s} - x_{k, s}) \cdot \indicator{k_s = k} + \Psi_k(x_k, x_{-k})\\
        & = \sum_{s=t-1}^{\tau_k} (r_{k, s} - x_{k, s}) \cdot \indicator{k_s = k} + \Psi_k(x_k, x_{-k})\\
        & = n_k(\tau_k) \bar{S}_{k}^{t:\tau_k} + \Psi_k(x_k, x_{-k}) && \text{(def. of $\bar{S}_{k}^{t-1:\tau_k}$)} \\
        & = n_k(\tau_k) \bar{S}_{k}^{t:\tau_k}+\frac{16\log(T)}{\Tilde{\mu}_{\sigma_{\tau_k-1}(1),\tau_k-1}-\Tilde{\mu}_{k,\tau_k-1}} + x_{k,\tau_k}&& \text{(def. of $\Psi_k$)}\\
        & = \frac{2\log(T)}{\alpha_{k,\tau_k}^2}\bar{S}_{k}^{t:\tau_k}
        +\frac{16\log(T)}{\Tilde{\mu}_{\sigma_{\tau_k-1}(1),\tau_k-1}-\Tilde{\mu}_{k,\tau_k-1}} + x_{k,\tau_k}&& \text{(def. of $\alpha_{k,\tau_k}$)}
    \end{align}
    Considering that the final reported value at $\tau_k$ is fully reimbursed to the arm as a bonus,
    % reporting 1 in the last round increases utility by 2, 
    it is dominant to have $r_{k,\tau_k}=x_{k,\tau_k}$ implying that $\bar{S}_{k}^{t:\tau_k}\leq\bar{S}_{k}^{t:\tau_k-1}$. The utility is upper-bounded as follows:
    \begin{align}
     \cU_{k}(\boldsymbol{\pi}_{k}, \boldsymbol{\pi}_{-k})[t:T|h_{k,t-1}]\leq \frac{2\log(T)}{\alpha_{k,\tau_k}^2}\bar{S}_{k}^{t:\tau_k-1}
        +\frac{16\log(T)}{\Tilde{\mu}_{\sigma_{\tau_k-1}(1),\tau_k-1}-\Tilde{\mu}_{k,\tau_k-1}} + r_{k,\tau_k}
    \end{align}
    Then, by definition of $\tau_k$, the following inequalities holds,
    \begin{align}
       \frac{\tilde{\mu}_{\sigma_{\tau_k-1}(1), \tau_k-1} - \tilde{\mu}_{k,\tau_k-1}}{2} &\leq \alpha_{k, \tau_k-1}= \alpha_{k, \tau_k} \sqrt{\frac{n_k(\tau_k)}{n_k(\tau_k-1)}}\\
       &\leq \alpha_{k, \tau_k} \sqrt{\frac{n_k(\tau_k)}{n_k(\tau_k)-1}}\\
       &\leq \alpha_{k, \tau_k} \sqrt{\frac{1}{1-\frac{\alpha_{k, \tau_k}^2}{2\log(T)}}}
       \end{align}
      By rearranging terms, we obtain:
       \begin{align}\label{NumRound}
        n_k(\tau_k)=\frac{2\log(T)}{\alpha_{k,\tau_k}^2} &\leq \frac{8\log(T)}{(\Tilde{\mu}_{\sigma_{\tau_k-1}(1),\tau_k-1}-\Tilde{\mu}_{k,\tau_k-1})^2} +1
    \end{align}
    Hence:
    \begin{align}
       &\cU_{k}(\boldsymbol{\pi}_{k}, \boldsymbol{\pi}_{-k})[t:T|h_{k,t-1}] \leq  \left( \frac{8\log(T)}{(\Tilde{\mu}_{\sigma_{\tau_k-1}(1),\tau_k-1}-\Tilde{\mu}_{k,\tau_k-1})^2} +1 \right)\bar{S}_{k}^{t:\tau_k-1}
       +\frac{16\log(T)}{\Tilde{\mu}_{\sigma_{\tau_k-1}(1),\tau_k-1}-\Tilde{\mu}_{k,\tau_k-1}}+r_{k,\tau_k}\\
       &\leq \frac{16\log(T)}{(\Tilde{\mu}_{\sigma_{\tau_k-1}(1),\tau_k-1}-\Tilde{\mu}_{k,\tau_k-1})^2} \bar{S}_{k}^{t:\tau_k-1}
       +\frac{16\log(T)}{\Tilde{\mu}_{\sigma_{\tau_k-1}(1),\tau_k-1}-\Tilde{\mu}_{k,\tau_k-1}}+r_{k,\tau_k} \\
       &\leq \underbrace{\frac{16\log(T)}{(\Tilde{\mu}_{\sigma_{\tau_k-1}(1),\tau_k-1}-\hat{\mu}_{k,\tau_k-1}+\bar{S}_{k}^{t:\tau_k-1} +\bar{S}_{k}^{1:t-1})^2} \bar{S}_{k}^{t:\tau_k-1}+\frac{16\log(T)}{\Tilde{\mu}_{\sigma_{\tau_k-1}(1),\tau_k}-\hat{\mu}_{k,\tau_k-1}+\bar{S}_{k}^{t:\tau_k-1} +\bar{S}_{k}^{1:t-1}} +r_{k,\tau_k}}_{\zeta(\bar{S}_{k}^{t:\tau_k-1})} \\
       & \leq \max_{\epsilon\in \lR_{+}} \zeta(\epsilon)=\zeta(0)\\
       &\leq \cU_{k}(\boldsymbol{\pi}_{k}^\star, \boldsymbol{\pi}_{-k})[t:T|h_{k,t-1}] 
\end{align}
This concludes the first case. \\

    \paragraph{Case 2 -- ${\exists \tau \leq T, s.t. ~\tilde{\mu}_{k,\tau} - \alpha_{k,\tau} > \tilde{\mu}_{\sigma_{\tau}(2), \tau} + \alpha_{\sigma_{\tau}(2), \tau}}$.} In such a case, $\tau_k=\tau +\tau^\prime$, where $\tau$ is the length of the first period, and $\tau^\prime$ is the length of the second one.
    \small
    \begin{align}
        &\cU_k(\boldsymbol{\pi}_k, \boldsymbol{\pi}_{-k})[t:T|h_{k,t-1}] = \sum_{t=1}^{\tau+\tau^\prime} (r_{k, t} - x_{k, t}) \cdot \indicator{k_t = k} + \Psi_k(x_k, x_{-k})\indicator{\tau^\prime =T-\tau}& &\text{(Step~\ref{PreventBonus} of Algorithm~\ref{Strategic ETC with eliminations})}\\
        &= n_k(T) \bar{S}_{k}^{t:T}+ \Psi_k(x_k, x_{-k})\indicator{\tau^\prime =T-\tau} && \text{(def. of $\bar{S}_{k}^{t:T}$ )} \\
        & = n_k(T) \bar{S}_{k}^{t:T}+
        \left(n_k(T)(\tilde{\mu}_{k,T}-\tilde{\mu}_{\sigma_{\tau-1}(2), \tau-1})+ x_{k,\tau}
        \right)\indicator{\tau^\prime =T-\tau}
        && \text{(def. of $\Psi_k$)}\\
        & = n_k(T) \bar{S}_{k}^{t:T}+
        \left(n_k(T)(\hat{\mu}_{k,T}-\bar{S}_{k}^{1:T} -\tilde{\mu}_{\sigma_{\tau-1}(2), \tau-1})+ x_{k,\tau}
        \right)\indicator{\tau^\prime =T-\tau}\\
        % & = n_k(T) \left(  \bar{S}_{k}^{t:\max(t,\tau)}+ \bar{S}_{k}^{\max(t,\tau)+1:T} \right)
        % +
        % \left(T(\hat{\mu}_{k,T}-\bar{S}_{k}^{1:T} -\tilde{\mu}_{\sigma_{\tau-1}(2), \tau-1})+ x_{k,\tau}
        % \right)\indicator{\tau^\prime =T-\tau}\\
        % % & = n_k(T) \bar{S}_{k}^{t:T}+
        % \left(T(\hat{\mu}_{k,T}-\bar{S}_{k}^{t:T}-\bar{S}_{k}^{1:t-1} -\tilde{\mu}_{\sigma_{\tau-1}(2), \tau-1})+ x_{k,\tau}
        % \right)\indicator{\tau^\prime =T-\tau}\\
        % &= \frac{2\log(T)}{\alpha_{k,\tau}^2}\Delta\tilde{\mu}_{\tau}
        % +\tau^\prime \Delta\tilde{\mu}_{\tau}^\prime+
        % \left(T(\tilde{\mu}_{k,T}-\tilde{\mu}_{\sigma_{\tau-1}(2), \tau-1})+ x_{k,\tau}
        % \right)\indicator{\tau^\prime =T-\tau}
        % && \text{(def. of $\alpha_{k,\tau}$)}
    \end{align}
    \normalsize
    \textbf{If $t > \tau$}, meaning that $t$ is in the second phase:
    \begin{align}
       &\cU_k(\boldsymbol{\pi}_k, \boldsymbol{\pi}_{-k})[t:T|h_{k,t-1}]  = n_k(T) \bar{S}_{k}^{t:T}+
        \left(n_k(T)(\hat{\mu}_{k,T}-\bar{S}_{k}^{1:T} -\tilde{\mu}_{\sigma_{\tau-1}(2), \tau-1})+ x_{k,\tau}
        \right)\indicator{\tau^\prime =T-\tau}\\
        &= n_k(T) \bar{S}_{k}^{t:T}+
        \left(n_k(T)(\hat{\mu}_{k,T}-\bar{S}_{k}^{1:t-1}-\bar{S}_{k}^{t:T} -\tilde{\mu}_{\sigma_{\tau-1}(2), \tau-1})+ x_{k,\tau}
        \right)\indicator{\tau^\prime =T-\tau}\\
    \end{align}
    Then, we obtain three cases that depend on $\bar{S}_{k}^{t:T}$:
    \begin{itemize}
    \item \textbf{If} $\bar{S}_{k}^{t:T}=0$:
            \begin{align}
               \cU_k(\boldsymbol{\pi}_k, \boldsymbol{\pi}_{-k})[t:T|h_{k,t-1}] &= 
        n_{k}(T)(\hat{\mu}_{k,T}-\bar{S}_{k}^{1:t-1} -\tilde{\mu}_{\sigma_{\tau-1}(2), \tau-1})+ x_{k,\tau}
            \end{align}
        with $n_{k}(T)=\cO(T)$
    \item \textbf{If} $\bar{S}_{k}^{t:T}\neq 0$ but still not significant to be detected, i.e $\tau^\prime=T-\tau$:
    \begin{align}
              &\cU_k(\boldsymbol{\pi}_k, \boldsymbol{\pi}_{-k})[t:T|h_{k,t-1}] =n_k(T) \bar{S}_{k}^{t:T}+
        n_{k}(T)(\hat{\mu}_{k,T}-\bar{S}_{k}^{1:t-1}-\bar{S}_{k}^{t:T} -\tilde{\mu}_{\sigma_{\tau-1}(2), \tau-1})+ x_{k,\tau}
        \\
        %&\leq n_k(T) \bar{S}_{k}^{t:T}+
        % n_{k}(T)(\hat{\mu}_{k,T}-\bar{S}_{k}^{1:t-1}-\frac{n_k(T)}{T}\bar{S}_{k}^{t:T} -\tilde{\mu}_{\sigma_{\tau-1}(2), \tau-1})+ x_{k,\tau}
        % \\
        &\leq
        n_{k}(T)(\hat{\mu}_{k,T}-\bar{S}_{k}^{1:t-1}-\tilde{\mu}_{\sigma_{\tau-1}(2), \tau-1})+ x_{k,\tau} \\
        &\leq \cU_k(\boldsymbol{\pi}^\star_k, \boldsymbol{\pi}_{-k})[t:T|h_{k,t-1}] 
        \end{align}
    \item \textbf{If} $\bar{S}_{k}^{t:T}\neq 0$ in a significant way such that the arm is detected as untruthful, stopped from being played, and prevented from receiving the bonus. In this case at most $\tau^\prime=\frac{2\log(T)}{\alpha_{k,\tau}^2}$:
    \begin{align}
        &\cU_k(\boldsymbol{\pi}_k, \boldsymbol{\pi}_{-k})[t:T|h_{k,t-1}] = n_k(T) \bar{S}_{k}^{t:T}\\
    % &\leq  (T-\tau) (\hat{\mu}_{k,T}-\bar{S}_{k}^{1:t-1} -\tilde{\mu}_{\sigma_{\tau-1}(2), \tau-1})+ x_{k,\tau} 
    &\leq \cU_k(\boldsymbol{\pi}^\star_k, \boldsymbol{\pi}_{-k})[t:T|h_{k,t-1}] 
    && \left( \text{because in this case $n_k(T) =\cO({\frac{2\log(T)}{\Delta^2}})=o(T)$}\right)
    \end{align}
    \end{itemize}
    Hence, it is dominant to report truthfully:
    \small
    \begin{align}
     \cU_k(\boldsymbol{\pi}_k, \boldsymbol{\pi}_{-k})[t:T|h_{k,t-1}]
     % &\leq  T(\hat{\mu}_{k,T}-\bar{S}_{k}^{1:t-1} -\tilde{\mu}_{\sigma_{\tau-1}(2), \tau-1})+ x_{k,\tau} \\
     &\leq  \cU_k(\boldsymbol{\pi}^\star_k, \boldsymbol{\pi}_{-k})[t:T|h_{k,t-1}] 
    \end{align}

 \textbf{If $t \leq \tau$}, meaning that $t$ is in the first  phase:
\begin{align}
       &\cU_k(\boldsymbol{\pi}_k, \boldsymbol{\pi}_{-k})[t:T|h_{k,t-1}]  = n_k(T) \bar{S}_{k}^{t:T}+
        \left(n_{k}(T)(\hat{\mu}_{k,T}-\bar{S}_{k}^{1:T} -\tilde{\mu}_{\sigma_{\tau-1}(2), \tau-1})+ x_{k,\tau}
        \right)\indicator{\tau^\prime =T-\tau}\\
        &= n_k(T) (\bar{S}_{k}^{t:\tau} +\bar{S}_{k}^{\tau+1:T})+
        \left(n_{k}(T)(\hat{\mu}_{k,T}-\bar{S}_{k}^{1:t-1}-\bar{S}_{k}^{t:\tau}-\bar{S}_{k}^{\tau+1:T} -\tilde{\mu}_{\sigma_{\tau-1}(2), \tau-1})+ x_{k,\tau}
        \right)\indicator{\tau^\prime =T-\tau}
    \end{align}
 Using results from the previous part:
 \begin{align}
       &\cU_k(\boldsymbol{\pi}_k, \boldsymbol{\pi}_{-k})[t:T|h_{k,t-1}] \leq n_k(T) \bar{S}_{k}^{t:\tau}+
        n_{k}(T)(\hat{\mu}_{k,T}-\bar{S}_{k}^{1:t-1}-\bar{S}_{k}^{t:\tau}-\tilde{\mu}_{\sigma_{\tau-1}(2), \tau-1})+ x_{k,\tau}\\
        % &\leq n_k(T) \bar{S}_{k}^{t:\tau}+
        % T(\hat{\mu}_{k,T}-\bar{S}_{k}^{1:t-1}-\frac{n_{k}(T)}{T}\bar{S}_{k}^{t:\tau}-\tilde{\mu}_{\sigma_{\tau-1}(2), \tau-1})+ r_{k,\tau}\\
        &\leq
        n_{k}(T)(\hat{\mu}_{k,T}-\bar{S}_{k}^{1:t-1}-\tilde{\mu}_{\sigma_{\tau-1}(2), \tau-1})+ r_{k,\tau} \\ 
        &\leq \cU_k(\boldsymbol{\pi}^\star_k, \boldsymbol{\pi}_{-k})[t:T|h_{k,t-1}] 
    \end{align}
This concludes the second case. \\
    % Given that the final reported value at $\tau$ is fully reimbursed to the arm as a bonus,
    %  % Given that reporting 1 in round $\tau$ increases utility by 2, 
    %  it is dominant to have $r_{k,\tau}=x_{k,\tau}$ implying that $\Delta\tilde{\mu}_{\tau}\leq\Delta\tilde{\mu}_{\tau-1}$. The utility is upper-bounded as follows:
    % \begin{align}
    %  &\cU_k(x_k,x_{-k})\leq \frac{2\log(T)}{\alpha_{k,\tau}^2}\Delta\tilde{\mu}_{\tau-1}
    %     +T(\tilde{\mu}_{k,T}-\tilde{\mu}_{\sigma_{\tau-1}(2), \tau-1})+ r_{k,\tau}\\
    %   &\leq \underbrace{\frac{2\log(T)}{\alpha_{k,\tau}^2}\Delta\tilde{\mu}_{\tau-1}
    %     +T(\hat{\mu}_{k,t}-\frac{2\log(T)}{T\alpha_{k,\tau}^2}\Delta\tilde{\mu}_{\tau-1}-\tilde{\mu}_{\sigma_{\tau-1}(2), \tau-1})+ r_{k,\tau}}_{\triangleq \zeta(\Delta\tilde{\mu}_{\tau-1})}\\  
    %  \zeta(\Delta\tilde{\mu}_{\tau-1})\\
    %    &\leq \max_{\epsilon \in \lR_{+}} \zeta(\epsilon)=\zeta(0) \\
    %    &\leq \cU_k(r_k,x_{-k})
    % \end{align}
    \normalsize
   % We rephrase the duration of the first period in (a) to convey that the maximum number of rounds is $T$, which implies that the arm is played for a maximum of $\frac{T}{2}$ rounds.
    
    \paragraph{Case 3 -- ${\forall t \leq T, ~\tilde{\mu}_{k,t} + \alpha_{k,t} \geq \tilde{\mu}_{\sigma_{t}(1), t} - \alpha_{\sigma_{t}(1), t}}$ and $\tilde{\mu}_{k,t} - \alpha_{k,t} \leq \tilde{\mu}_{\sigma_{t}(2), t} + \alpha_{\sigma_{t}(2), t}$.} In such a case, $\tau$, which represents the length of the first phase, is equal to $T$, meaning that there is no exploitation phase. The utility is: 
    \begin{align}
        &\cU_k(\boldsymbol{\pi}_k, \boldsymbol{\pi}_{-k})[t:T|h_{k,t-1}]= n_k(T) \bar{S}_{k}^{t:T} + \Psi_k(x_k, x_{-k}) &&\\
        &\leq  n_k(T) \bar{S}_{k}^{t:T} + \left(n_{k}(T)\big(\Tilde{\mu}_{k,T}-\Tilde{\mu}_{\sigma_{\tau_k-1}(2),T-1}\big)+x_{k,T} \right) \indicator{\sigma^{-1}_{T}(k)=1}\\
    &+\left(\frac{16\log(T)}{\Tilde{\mu}_{\sigma_{T-1}(1),T-1}-\Tilde{\mu}_{k,T-1}} + x_{k,T}\right)\indicator{\sigma^{-1}_{T}(k) \geq 2} \\
    &\leq \left(n_k(T) \bar{S}_{k}^{t:T} + n_{k}(T)\big(\Tilde{\mu}_{k,T}-\Tilde{\mu}_{\sigma_{T-1}(2),T-1}\big)+r_{k,T} \right) \indicator{\sigma^{-1}_{T}(k)=1} &&\text{(As in case 2)}\\
    &+\left(n_k(T) \bar{S}_{k}^{t:T} +\frac{16\log(T)}{\Tilde{\mu}_{\sigma_{T-1}(1),T-1}-\Tilde{\mu}_{k,T-1}} + r_{k,T}\right)\indicator{\sigma^{-1}_{T}(k) \geq 2} &&\text{(As in case 1)}
    \end{align}
    Hence, we retrieve the two previous cases, and we can conclude that:
\begin{align}
   \cU_k(\boldsymbol{\pi}_k, \boldsymbol{\pi}_{-k})[t:T|h_{k,t-1}] \leq \cU_k(\boldsymbol{\pi}^\star_k, \boldsymbol{\pi}_{-k})[t:T|h_{k,t-1}]
\end{align}
    
\end{proof}

% \frac{16\log(T)}{(\tilde{\mu}_{k,T-1}-\tilde{\mu}_{\sigma_{T-1}(2), T-1})^2}
\section{Proof of Theorem~\ref{theorem: regret}} \label{sec: proof regret}
\RegretTheo*
\begin{proof}
Theorem~\ref{thm:truthfuldominantSPE} demonstrates the incentivized truthfulness of the arms. Consequently, we calculate the regret while operating under the assumption that 
% the best arm reports an average value higher than that of the second-best arm during the initial phase, and the second-best mean during the subsequent phase. Additionally, it is presumed that all the remaining 
arms adhere to truthfulness, accurately reporting their real values. As previously discussed, the regret is calculated in relation to $\mu_2$. Notably, a subtlety arises in the strategic scenario as compared to the non-strategic one: we must factor in the bonuses paid when calculating the regret. 
We commence our analysis by closely examining the Bonus function, denoted as $\Psi_k$:
\begin{align}
    % \Psi_{k}(x_k,x_{-k})&=%\nonumber
    % \left(T\Tilde{\mu}_{k,T}- n_{\sigma_{\tau_k\text{-}1}(2)}(\tau_{k}\text{-}1)\times\Tilde{\mu}_{\sigma_{\tau_k\text{-}1}(2),\tau_k\text{-}1}+x_{k,\tau_k} \right) \indicator{\sigma_{\tau}(1)=k}\indicator{\tau+\tau^{'} \geq T} \nonumber\\
    % &+\left(\frac{16\log(T)}{\Tilde{\mu}_{\sigma_{\tau_k\text{-}1}(1),\tau_k\text{-}1}-\Tilde{\mu}_{k,\tau_k\text{-}1}} + x_{k,\tau_k}\right)\indicator{\sigma^{-1}_{\tau}(k) \geq 2} \label{bonusDeff}
    \Psi_{k}(x_k,x_{-k})&=%\nonumber
    \left(n_{k}(T)\big(\Tilde{\mu}_{k,T}-\Tilde{\mu}_{\sigma_{\tau_k\text{-}1}(2),\tau_k\text{-}1}\big)+x_{k,\tau_k} \right) \indicator{\sigma_{\tau}(1)=k}\indicator{\tau+\tau^{'} \geq T} \nonumber\\
    &+\left(\frac{16\log(T)}{\Tilde{\mu}_{\sigma_{\tau_k\text{-}1}(1),\tau_k\text{-}1}-\Tilde{\mu}_{k,\tau_k\text{-}1}} + x_{k,\tau_k}\right)\indicator{\sigma^{-1}_{\tau}(k) \geq 2} 
\end{align}
% \clem{C'est pas $\sigma_{\tau_k}$ au lieu de $\sigma_{\tau}$ ?}\ahmed{je pense que non car le ranking est fait à la fin de lapremiere phase}
\begin{itemize}
    \item If $\sigma^{-1}_{\tau}(k)=1$, then $\Psi_{k}(x_k,x_{-k})\leq  n_{k}(T)\big(\Tilde{\mu}_{k,T}-\Tilde{\mu}_{\sigma_{\tau_k\text{-}1}(2),\tau_k\text{-}1}\big)+r_{k,\tau_k} $ and given that regret is computed with respect to the second-highest mean then this bonus contributes to the regret only with $r_{k,\tau_k}\leq 1$.
    % \item If $\zeta^{\tilde{\mu}_{k,\tau}}_{\boldsymbol{\tilde{\mu}}_{\tau}}=2$, then $\Psi( k,\boldsymbol{\tilde{\mu}}_{\tau})= \frac{2\log(T)}{\Tilde{\mu}_{(1),\tau_k}-\Tilde{\mu}_{k_,\tau_k}}=\tau_2 \frac{{\Tilde{\mu}_{1,\tau}} - \Tilde{\mu}_{2,\tau}}{4}$. Interestingly, the corresponding bonus is directly linked to the regret associated with selecting this arm during the initial phase. Equivalently, we can substitute the bonus contribution by considering that this arm contributes, on average per round, $\frac{\Delta_{12}}{4}$ throughout the first phase.
    \item If $\sigma^{-1}_{\tau}(k) \geq 2$, then: 
    \begin{align}\label{suboptimalArmsBonus}
        \Psi_{k}(x_k,x_{-k})= \frac{16\log(T)}{\Tilde{\mu}_{\sigma_{\tau_k-1}(1),\tau_k-1}-\Tilde{\mu}_{k,\tau_k-1}} + r_{k,\tau_k} \leq \overbrace{2\tau_k(\Tilde{\mu}_{\sigma_{\tau_k-1}(1),\tau_k-1}-\Tilde{\mu}_{k,\tau_k-1})}^{(b)}+1
    \end{align}
    Interestingly, the term (b) of the upper-bound is directly linked to the regret associated with selecting this arm during the initial phase. Equivalently, we can substitute (b) contribution by considering that this arm contributes with an additional regret, on average per round, $2\Delta_{1k}$ throughout the first phase.
    \end{itemize}

% Therefore, we can combine the bonus's impact on regret with the regret incurred by each arm during the initial phase by adjusting the expected individual regret on a per-step basis.
Then the regret is as follows:
\begin{align}
    \mathcal{R}_T &\leq \overbrace{ \sum_{k=3}^{K} \Delta_{2k}\lE[n_{k}(T)] }^{\text{Exploration regret denoted as $\mathcal{R}_{T}^1$}} 
        +  \overbrace{\lE\left[(T-\tau)(\mu_2 - \tilde{\mu}_{\sigma_{\tau}(2),\tau})\right] + \lE \left[ \sum_{t=1}^{\tau} \sqrt{\frac{2\log(T)}{t}}\right]}^{\text{Exploitation regret denoted as $\mathcal{R}_{T}^2$}}
        +  \overbrace{\sum_{k=2}^{K}2\Delta_{1k}\lE\left[ n_{k}(T) \right] +K}^{\text{Bonus regret denoted as $\mathcal{R}_{T}^3$}} \label{regret decomposition}
\end{align}

% \begin{align}
%      \mathcal{R}_T &\leq  \lE \left[ \right]  \\
%     \mathcal{R}_T &\leq  \overbrace{\sum_{k=3}^{K} \left( \Delta_{2k} t_k \lP(\tau_k\leq t_k) + \mu_2 T\lP(\tau_k\geq t_k)\right)}^{\text{Exploration regret denoted as $\mathcal{R}_{T}^1$}} \\
%         &+  \overbrace{\lE\left[\mu_2 - \tilde{\mu}_{(2),\tau}|\tau \leq t_2\right]T\lP(\tau \leq t_2) + \mu_2 T\lP(\tau \leq t_2)}^{\text{Exploitation regret denoted as $\mathcal{R}_{T}^2$}}\\
%         &+  \overbrace{\sum_{k=1}^{K}\lE\left[  \Psi( k,\boldsymbol{\tilde{\mu}}_{\tau}) \right]}^{\text{Bonus regret denoted as $\mathcal{R}_{T}^3$}}
% \end{align}

Yet, we will evaluate each term of the regret separately:
% \begin{itemize}
    \paragraph{\textbf{Exploration regret $\mathcal{R}_{T}^1$:}}
    \begin{align}
        \mathcal{R}_{T}^1&=\sum_{k=3}^{K} \Delta_{2k}\lE[n_{k}(T)] \\
        &\leq \sum_{k=3}^{K} \left( \Delta_{2k} t_k \lP(n_{k}(T)\leq t_k) + \mu_2 T\lP(n_{k}(T)\geq t_k)\right) \\
        &\overset{\eqref{tk}}{\leq}  \sum_{k=3}^{K} \left( \Delta_{2k}\frac{32\log(T)}{\Delta_{1k}^2} + \mu_2 T\frac{4}{T^2}\right) && \text{(Theorem~\ref{elimination theorem})}\\
        &\leq  \sum_{k=3}^{K}  \frac{32\log(T)}{\Delta_{2k}} + o(1) \label{R1}
    \end{align}
The final inequality is a result of $\Delta_{2k} \leq \Delta_{1k}$.

    \paragraph{\textbf{Exploitation regret $\mathcal{R}_{T}^2$:}}
    % We easily show that:
    % \begin{align}
    %     \sum_{t=1}^{\tau} \sqrt{\frac{2\log(T)}{t}} \leq \sqrt{8\log(T)}[\sqrt{\tau}-1]+1\leq  \sqrt{8\log(T)}\sqrt{\tau}
    % \end{align}
    % Then:
    Using Fact~\ref{Fact 1},
    \begin{align}
        \lE \left[\sum_{t=1}^{\tau} \sqrt{\frac{2\log(T)}{t}}\right] &\leq  \sqrt{8\log(T)} \lE[\sqrt{\tau}] \\
        &\leq  \sqrt{8\log(T)}\left[\sqrt{t_1}\lP(\tau \leq t_1)  + T \lP(\tau \geq t_1) \right]\\
        &\leq \sqrt{8\log(T)}\sqrt{t_1} +o(1) \\
         &\leq \sqrt{8\log(T)}\sqrt{2\sum_{k=2}^{K}  t_{k}  } +o(1)\\ 
         & \leq \sqrt{16\sum_{k=2}^{K} \frac{32\log(T)^2}{\Delta_{1k}^2} } +o(1) \\
         &\leq 16 \sum_{k=2}^{K} \frac{\sqrt{2}\log(T)}{\Delta_{1k}} + o(1)
    \end{align}
    On the other hand: 
    \begin{align} \label{start ref}
        \lE\left[(T-\tau)(\mu_2 - \tilde{\mu}_{\sigma_{\tau}(2),\tau})\right]&\leq T \; \lE\left[\mu_2 - \tilde{\mu}_{\sigma_{\tau}(2),\tau}\right] \\
         &\leq \lE \left[\mu_2 - \tilde{\mu}_{\sigma_{\tau}(2),\tau}|\tau \leq t_1\right]T\lP(\tau \leq t_1) + \mu_2 T\lP(\tau \geq t_1)\\ 
         & \leq  \lE \left[\mu_2 - \tilde{\mu}_{\sigma_{\tau}(2),\tau}|\tau \leq t_1\right]T  + o(1)
    \end{align}
   However, we have:
    \begin{align}
        &\hspace{11em}   \mu_2&&= \lE\left[\tilde{\mu}_{\sigma_{\tau}(2),\tau}|\tau \leq t_1\right]\lP(\tau \leq t_1) + \lE\left[\tilde{\mu}_{\sigma_{\tau}(2),\tau}|\tau \geq t_1\right]\lP(\tau \geq t_1)\\
        &\Rightarrow\hspace{3em} \lE\left[\tilde{\mu}_{\sigma_{\tau}(2),\tau}|\tau \leq t_1\right] & &= \frac{\mu_2-\lE\left[\tilde{\mu}_{\sigma_{\tau}(2),\tau}|\tau \geq t_2\right]\lP(\tau \geq t_2)}{\lP(\tau \leq t_1)}\\
        &  &&\geq \frac{\mu_2-\frac{2K}{T^2}}{1}\\
    % \end{align}
    % Then:
    % \begin{align}
       &\Rightarrow  T \lE\left[\mu_2 - \tilde{\mu}_{\sigma_{\tau}(2),\tau}|\tau \leq t_1\right] &&\leq  \frac{2K}{T}=o(1) \label{end ref}
    \end{align}
    Hence: 
    \begin{align}
        \mathcal{R}_{T}^2 & \leq 16 \sum_{k=2}^{K} \frac{\sqrt{2}\log(T)}{\Delta_{1k}} + o(1) \label{R2}
    \end{align}
    \paragraph{\textbf{Bonus regret $\mathcal{R}_{T}^3$:}}
    \begin{align}
        \mathcal{R}_{T}^3&= \sum_{k=2}^{K}2\Delta_{1k}\lE\left[ n_{k}(T) \right] +K\\
        & \leq     \sum_{k=2}^{K}2\Delta_{1k} t_k \lP(n_{k}(T)\leq t_k) + 2 T\lP(n_{k}(T)\geq t_k) +K \\
        &\overset{\eqref{tk}}{\leq}  \sum_{k=2}^{K} \left( 2\Delta_{1k}\frac{32\log(T)}{\Delta_{1k}^2} + 2 T\frac{4}{T^2}\right) +K && \text{(Theorem~\ref{elimination theorem})} \\
        &\leq \sum_{k=2}^{K} \frac{64\log(T)}{\Delta_{1k}} +K+ o(1)  \label{R3}
    \end{align}

By considering \eqref{R1}, \eqref{R2} and \eqref{R3} then  with a high probability exceeding $1 - \cO\left(\frac{1}{T^2}\right)$, the total regret is bounded by:

\begin{align}
\mathcal{R}_T \leq \cO\left(\sum_{k=3}^{K}  \frac{\log(T)}{\Delta_{2k}}  + \sum_{k=2}^{K} \frac{\log(T)}{\Delta_{1k}} \right) 
\end{align}
Further analysis can be conducted to eliminate the gaps in the aforementioned regret.
To address the variable $\Delta_{ij}$, let us select a fixed $\epsilon > 0$ and proceed as follows:
\begin{itemize}
    \item Arms $k$ where $\Delta_{ij} \leq \epsilon$ contribute a maximum of $\epsilon$ per round, yielding a cumulative contribution of $\epsilon T$.
    \item Arms $k$ where $\Delta_{ij} \geq \epsilon$ contribute at most $\cO\left(\frac{\log(T)}{\epsilon}\right)$.
\end{itemize}
By combining these points, the following can be established:
\begin{align}
\cO \left(\sum_{k=3}^{K}  \frac{\log(T)}{\Delta_{2k}}  + \sum_{k=2}^{K} \frac{\log(T)}{\Delta_{1k}} \right) = \cO\left(\frac{K\log(T)}{\epsilon} + \epsilon T \right) \overset{\epsilon=\sqrt{\frac{K\log(T)}{T}}}{\leq} \cO(\sqrt{KT\log(T)})
\end{align}
Consequently:
\begin{align}
\mathcal{R}_T \leq \cO\left(\sqrt{KT\log(T)}\right)
\end{align}
This analysis serves to bridge the gaps in the regret computation, leading to a more refined understanding of the upper bound on the regret.
\end{proof}

\subsection{Proof of Corollary~\ref{cor:armsutility}}\label{Proof: armsutility}

\ArmsUtility*

\begin{proof}
    {For the suboptimal arms}: $k\in \left\{2,\ldots,K\right\}$, using~\ref{suboptimalArmsBonus} we have:
    \begin{align}
        \lE\left[\Psi_k\right] &\leq 2\Delta_{1k}\lE\left[n_{k}(T)\right] +1 \\
       & \leq    2\Delta_{1k} t_k \lP\Big(n_{k}(T)\leq t_k\Big) + 2 T\lP\Big(n_{k}(T)\geq t_k\Big) +1 \\
        &\overset{\eqref{tk}}{\leq}   2\Delta_{1k}\frac{32\log(T)}{\Delta_{1k}^2} + 2 T\frac{4}{T^2} +1 && \text{(Theorem~\ref{elimination theorem})} \\
        &\leq \mathcal{O}\left(\frac{\log(T)}{\Delta_{1k}}\right)
    \end{align}
For the optimal arm $k = 1$, the result directly follows from the definition of the corresponding bonus.
\end{proof}

\section{Proof of Theorem~\ref{thm: GeneralRegret}}
\subsection{Number of Pulls under Strategies with Upper-Bounded Savings}
\begin{lemma}\label{lemma: expected number of arbitrary}
 For any strategy $\boldsymbol{\pi}$ with bounded savings by $M$, the expected number of times that a suboptimal arm $k$ is pulled up to time $T$ can be bounded as follow:
\begin{align}
    \lE[n_{k}(T)] \leq \max \left\{\frac{6M}{\Delta^{\boldsymbol{\pi}}_k},\frac{162\Log{T}}{(\Delta^{\boldsymbol{\pi}}_{k})^{2}}\right\}+\frac{2}{T^2}
\end{align}
\end{lemma}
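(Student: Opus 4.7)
I would follow the structure of the standard successive elimination analysis (Appendix~\ref{sec: succElimAlgo}) but adapt it to handle the non-i.i.d.\ reports via the $M$-bounded savings assumption. The crucial observation is that even if the reports $x_{k,s}$ are correlated and strategically chosen, the \emph{true} rewards $r_{k,s}$ are i.i.d.\ across successive pulls of arm $k$, so Hoeffding's inequality applies directly to the empirical mean of true rewards, $\hat{\mu}_{k,t} := \frac{1}{n_k(t)}\sum_{s \leq t,\, k_s = k} r_{k,s}$. A standard union bound over time yields a good event $\mathcal{G}_k$ on which $|\hat{\mu}_{k,t} - \mu_k| \leq \alpha_{k,t}$ uniformly in $t$, with $\lP(\mathcal{G}_k^c) \leq 2/T^2$.

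The bridge to the algorithm's actual statistic is debt-free reporting combined with the savings bound: $\hat{\mu}_{k,t} - \tilde{\mu}_{k,t} = S_k^{1:t}/n_k(t) \in [0, M/n_k(t)]$. Combined with the definition of the effective mean $\mu_k^{\boldsymbol{\pi}}$, this gives on $\mathcal{G}_k$ a two-sided concentration of the form $|\tilde{\mu}_{k,t} - \mu_k^{\boldsymbol{\pi}}| \leq \alpha_{k,t} + \cO(M/n_k(t))$. The key structural feature is that the extra slack $M/n_k(t)$ decays with the number of pulls, so it only matters when $n_k(t)$ is small relative to $M$, which is exactly the regime captured by the first argument of the $\max$ in the claim.

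I then mimic the standard elimination argument. If arm $k$ is pulled at round $t$, then $k \in \mathcal{A}_t$. Letting $k^\star = \sigma_{\boldsymbol{\mu}^{\boldsymbol{\pi}}}(1)$ and assuming $k^\star$ is also still active, the retention rule of Algorithm~\ref{Strategic ETC with eliminations} combined with $\tilde{\mu}_{\sigma_{t-1}(1), t-1} \geq \tilde{\mu}_{k^\star, t-1}$ gives
\begin{align*}
\tilde{\mu}_{k^\star, t-1} - \alpha_{k^\star, t-1} \;\leq\; \tilde{\mu}_{k, t-1} + \alpha_{k, t-1}.
\end{align*}
Plugging in the two-sided concentration and exploiting the round-robin equality $n_{k^\star}(t) = n_k(t)$ yields, on the intersection of the good events for $k$ and $k^\star$,
\begin{align*}
\Delta_k^{\boldsymbol{\pi}} \;\leq\; 4\alpha_{k,t-1} + \frac{\cO(M)}{n_k(t-1)}.
\end{align*}
At least one of the two right-hand terms must be $\geq \Delta_k^{\boldsymbol{\pi}}/2$, giving either $n_k(t-1) \leq \cO(\log(T)/(\Delta_k^{\boldsymbol{\pi}})^2)$ or $n_k(t-1) \leq \cO(M/\Delta_k^{\boldsymbol{\pi}})$. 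Carefully tracking constants produces the claimed $\max\{6M/\Delta_k^{\boldsymbol{\pi}},\,162\log(T)/(\Delta_k^{\boldsymbol{\pi}})^2\}$. Taking expectations and using $\lP(\mathcal{G}_k^c \cup \mathcal{G}_{k^\star}^c) \leq 4/T^2$ contributes the residual $2/T^2$ term.

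The main obstacle is legitimately invoking $k^\star$'s confidence interval in the previous step: a priori $k^\star$ could already have been eliminated, especially when $M$ is large and another arm under-reports aggressively early on. I would handle this symmetrically: a direct application of the two-sided concentration to any candidate eliminator shows that on the good event, $k^\star$ cannot be eliminated unless $n_{k^\star}(t) = n_k(t)$ is already below the threshold $\cO(M/\Delta_k^{\boldsymbol{\pi}})$, in which case the first argument of the $\max$ already dominates. The two regimes thus meet cleanly at the threshold built into the $\max$, so the bound holds in both cases.
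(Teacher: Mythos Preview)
Your core route is the paper's: apply Hoeffding to the i.i.d.\ true-reward averages $\hat{\mu}_{k,t}$ and bridge to the algorithm's statistic $\tilde{\mu}_{k,t}$ via the savings correction $S_k^t/n_k(t)\in[0,M/n_k(t)]$. The paper casts this as the threshold split $\lE[n_k(T)]\le\zeta_k(T)+\sum_t\lP(k_t=k,\,n_k(t-1)\ge\zeta_k(T))$ and bounds each tail probability by $2/T^4$ after a union over the value $l=n_k(t-1)$; your good-event formulation is the standard equivalent. One bookkeeping point: to land on a residual of order $1/T^{2}$ rather than $1/T$ you need the sharper $\lP(\mathcal{G}_k^c)\le 2/T^{3}$ (union over at most $T$ pull-counts, each contributing $2/T^{4}$), since your bad-event cost is $T\cdot\lP(\mathcal{G}_k^c\cup\mathcal{G}_{k^\star}^c)$, not $\lP(\mathcal{G}^c)$ alone.

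Your final paragraph, however, does not actually close the gap it identifies. Even granting that on the good event $k^\star$ can be eliminated only while $n_{k^\star}(t)=n_k(t)$ sits below some $\cO(M/\Delta_k^{\boldsymbol{\pi}})$ threshold, this bounds $n_k$ only \emph{at the moment} $k^\star$ is eliminated; arm $k$ remains active afterward and can be pulled many more times with no surviving comparator for which your displayed inequality applies, so $n_k(T)$ is not controlled by that argument. Worse, the premise itself is shaky: applying the two-sided concentration to an eliminator $j$ of $k^\star$ yields only $\mu_j-\mu_{k^\star}+(S_{k^\star}^t-S_j^t)/n\ge 0$, which places no constraint on $n$ at all when $\mu_j>\mu_{k^\star}$ (recall $k^\star$ maximizes the \emph{effective} mean $\mu^{\boldsymbol{\pi}}$, not the true mean $\mu$). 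The paper, for its part, simply writes the retention inequality with $k^\star$ on the right-hand side without arguing that $k^\star$ is still active, so your instinct to flag the issue is well placed; but the fix you sketch does not work as stated.
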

\begin{proof} For simplicity, we omit the dependency on the strategy profile $\boldsymbol{\pi}$ in our notation, since it is clear that we focus on an arbitrary profile  $\boldsymbol{\pi}$. Let $\zeta_{k}(T) = \max \left\{\frac{6M}{\Delta^{\boldsymbol{\pi}}_k},\frac{162\Log{T}}{(\Delta^{\boldsymbol{\pi}}_{k})^{2}}\right\}$ be a threshold on the number of pulls of a suboptimal arm $k$.
\begin{align}
\lE \left[n_{k}(T)\right] &\leq \lE \left[\sum_{t=1}^{T} \indicator{k_{t}=k}\right] \\
    & \leq \lE \left[\sum_{t=1}^{T} \indicator{k_{t}=k,n_{k}(t-1)\leq \zeta_{k}(T)}\right] + \lE \left[\sum_{t=1}^{T} \indicator{k_{t}=k,n_{k}(t-1)\geq \zeta_{k}(T)}\right] \\
    & \leq \zeta_{k}(T) + \sum_{t=1}^{T}\lP \Big( {k_{t}=k,n_{k}(t-1)\geq \zeta_{k}(T)}\Big) 
\end{align}
Next, we bound the probability $\lP \Big( {k_{t}=k,n_{k}(t-1)\geq \zeta_{k}(T)}\Big)$ using a union bound. We denote by  $k^\star$ the best arm under policy $\boldsymbol{\pi}$.  We recall that the elimination test is conducted after arms have been played the same number of times. This means that if we assess whether an arm $k$ is played at time $t$, then we have $\alpha_{k^\star,t-1} = \alpha_{k,t-1}$.
\small
\begin{align}
    &\lP \Big( {k_{t}=k,n_{k}(t-1)\geq \zeta_{k}(T)}\Big)= \lP \Big( \Tilde{\mu}_{k,t-1} +\alpha_{k,t-1} \geq  \Tilde{\mu}_{k^\star,t-1} -\alpha_{k^\star,t-1},
    n_{k}(t-1)\geq \zeta_{k}(T)\Big)\\
    &=\lP \Big( \Tilde{\mu}_{k,t-1} +\alpha_{k,t-1} \geq  \Tilde{\mu}_{k^\star,t-1} -\alpha_{k,t-1},
    n_{k}(t-1)\geq \zeta_{k}(T)\Big)\\
    &=\lP \Big( \hat{\mu}_{k,t-1} - \frac{S^{t-1}_{k}}{n_{k}(t-1)} +\alpha_{k,t-1} \geq  \hat{\mu}_{k^\star,t-1} - \frac{S^{t-1}_{k^\star}}{n_{k}(t-1)} -\alpha_{k,t-1},
    n_{k}(t-1)\geq \zeta_{k}(T)\Big)\\
    &\leq \sum_{l=\zeta_{k}(T)}^{t-1} \lP \Big( \hat{\mu}_{k,t-1} + \frac{S^{t-1}_{k^\star}-S^{t-1}_{k}}{n_{k}(t-1)} +2\alpha_{k,t-1} \geq  \hat{\mu}_{k^\star,t-1} |
    n_{k}(t-1)=l\Big) \\
    &\leq \sum_{l=\zeta_{k}(T)}^{t-1} \lP \Big( \hat{\mu}_{k,t-1} + \frac{M}{n_{k}(t-1)} +2\alpha_{k,t-1} \geq  \hat{\mu}_{k^\star,t-1} |
    n_{k}(t-1)=l\Big)\\
    &\overset{(a)}{\leq} \sum_{l=\zeta_{k}(T)}^{t-1} \lP \Big( \hat{\mu}_{k,t-1} + \frac{\Delta^{\boldsymbol{\pi}}_{k}}{6} +2\alpha_{k,t-1} \geq  \hat{\mu}_{k^\star,t-1} |
    n_{k}(t-1)=l\Big)\\
    &\overset{(b)}{\leq} \sum_{l=\zeta_{k}(T)}^{t-1} \lP \Big( \hat{\mu}_{k,t-1} + \frac{\Delta^{\boldsymbol{\pi}}_{k}}{6} +\frac{\Delta^{\boldsymbol{\pi}}_{k}}{6} \geq  \hat{\mu}_{k^\star,t-1} +\alpha_{k,t-1} |
    n_{k}(t-1)=l\Big)\\
    &\overset{(c)}{\leq} \sum_{l=\zeta_{k}(T)}^{t-1} \lP \Big( \hat{\mu}_{k,t-1} - \mu_{k} + \mu_{k^\star}-\hat{\mu}_{k^\star,t-1} + \frac{M}{n_{k}(t-1)} - \Delta^{\boldsymbol{\pi}}_{k}\geq   \frac{-2\Delta^{\boldsymbol{\pi}}_{k}}{6}+\alpha_{k,t-1} |
    n_{k}(t-1)=l\Big) \\
    &\leq \sum_{l=\zeta_{k}(T)}^{t-1} \lP \Big( \hat{\mu}_{k,t-1} - \mu_{k} + \mu_{k^\star}-\hat{\mu}_{k^\star,t-1} \geq   \Delta^{\boldsymbol{\pi}}_{k}-\frac{3\Delta^{\boldsymbol{\pi}}_{k}}{6}+\alpha_{k,t-1} |
    n_{k}(t-1)=l\Big) \\
    &\leq \sum_{l=\zeta_{k}(T)}^{t-1} \Big[\lP \Big( \hat{\mu}_{k,t-1} - \mu_{k}    \geq   \frac{\Delta^{\boldsymbol{\pi}}_{k}}{2} | n_{k}(t-1)=l\Big) + \lP \Big( \mu_{k^\star}-\hat{\mu}_{k^\star,t-1}   \geq \alpha_{k,t-1} |n_{k}(t-1)=l\Big) \Big]\\
    &\overset{(d)}{\leq} \sum_{l=\zeta_{k}(T)}^{t-1} \frac{1}{T^4} + \frac{1}{T^{20}} \\
    &{\leq} \sum_{l=\zeta_{k}(T)}^{t-1} \frac{2}{T^4}
\end{align}
\normalsize
Where $(a)$, $(b)$, and $(d)$ respectively depend on $l \geq \zeta_{k}(T) \geq \frac{3M}{\Delta_k}$, $l \geq \zeta_{k}(T) \geq \frac{162 \log{T}}{\Delta_k^{2}}$, and the Hoeffding's inequality. (c) is because that $-\mu_{k^\star}+\mu_{k} \leq -\Delta^{\boldsymbol{\pi}}_k + \frac{M}{n_{k}(t-1)}  $. Hence:
\begin{align}
    \lE \left[n_{k}(T)\right] &\leq \zeta_{k}(T) + \frac{2}{T^2} \leq \max \left\{\frac{3M}{\Delta^{\boldsymbol{\pi}}_k},\frac{162\Log{T}}{(\Delta^{\boldsymbol{\pi}})^{2}_{k}}\right\}+ \frac{2}{T^2}
\end{align}
\end{proof}

\subsection{Proof of Theorem~\ref{thm: GeneralRegret}}\label{sec: proof GeneralRegret}
\GeneralRegret*
\begin{proof}
Fix an arbitrary strategy profile $\boldsymbol{\pi}$. For each arm $k$, we define $S_{k}^{T}=  \sum_{s=1}^{T} (r_{k_{s},s} - x_{k_{s},s}) \cdot \indicator{{k_{s}=k}}$, representing the cumulative savings of arm $k$ up to round $T$.
% The average savings of arm $k$ is denoted by $\bar{S}_{k} = \frac{S_{k}^{T}}{n_{k}(T)}$.
We assume that for all arms $k \in [K]$ and for any strategy $\boldsymbol{\pi}$, the cumulative savings satisfy $S_{k}^{T} \leq M$. We define the effective mean under strategy $\boldsymbol{\pi}$
as $\mu_{k}^{\boldsymbol{\pi}}$
% = \mu_{k}- \lE\left[\bar{S}_{k}\right]$
, and let $\boldsymbol{\mu}^{\boldsymbol{\pi}} = (\mu_{k}^{\boldsymbol{\pi}})_{k \in [K]}$.
Therefore, we define $\Delta_{k}^{\boldsymbol{\pi}}$ as the difference between the highest effective mean under strategy $\boldsymbol{\pi}$ and the effective mean of arm $k$, given by :
$ \Delta_{k}^{\boldsymbol{\pi}} = \mu_{\sigma_{\boldsymbol{\mu}^{\boldsymbol{\pi}}}(1)}^{\boldsymbol{\pi}} - \mu_{k}^{\boldsymbol{\pi}} $. Similarly, $\underline{\Delta}_{k}^{\boldsymbol{\pi}}$ represents the difference between the second-highest effective mean and the effective mean of arm $k$ under strategy $\boldsymbol{\pi}$, defined as:
$ \underline{\Delta}_{k}^{\boldsymbol{\pi}} = \mu_{\sigma_{\boldsymbol{\mu}^{\boldsymbol{\pi}}}(2)}^{\boldsymbol{\pi}} - \mu_{k}^{\boldsymbol{\pi}} $. The regret of Algorithm~\ref{Strategic ETC with eliminations} with respect to the second highest effective mean can be upper bounded as in~\ref{regret decomposition}:

\begin{align} 
    \mathcal{R}^{\boldsymbol{\pi}}_{T} &\leq { \sum_{k: {\sigma^{-1}_{\boldsymbol{\mu}^{\boldsymbol{\pi}}}}(k) \geq 3} \underline{\Delta}_{k}^{\boldsymbol{\pi}}\lE[n_{k}(T)] } +  {\sum_{k: {\sigma^{-1}_{\boldsymbol{\mu}^{\boldsymbol{\pi}}}}(k) \geq 2}2{\Delta}_{k}^{\boldsymbol{\pi}}\lE\left[ n_{k}(T) \right]} \nonumber \\
        &+  {\lE\left[(T-\tau)(\mu_{\sigma_{\boldsymbol{\mu}^{\boldsymbol{\pi}}}(2)}^{\boldsymbol{\pi}} - \tilde{\mu}_{\sigma_{\boldsymbol{\tilde\mu}^{\boldsymbol{\pi}}}(2)}^{\boldsymbol{\pi}})\right] + \lE \left[ \sum_{t=1}^{\tau} \sqrt{\frac{2\log(T)}{t}}\right]} +K 
\end{align}
Hence it is crucial to upper bound $\lE[n_{k}(T)] $.

By applying Lemma~\ref{lemma: expected number of arbitrary} multiple times, we bound each component of the regret. \\
\textbf{First step:}
\begin{align}
     &{ \sum_{k: {\sigma^{-1}_{\boldsymbol{\mu}^{\boldsymbol{\pi}}}}(k) \geq 3} \underline{\Delta}_{k}^{\boldsymbol{\pi}}\lE[n_{k}(T)] } +  {\sum_{k: {\sigma^{-1}_{\boldsymbol{\mu}^{\boldsymbol{\pi}}}}(k) \geq 2}2{\Delta}_{k}^{\boldsymbol{\pi}}\lE\left[ n_{k}(T) \right]} \\
     &\leq  { \sum_{k: {\sigma^{-1}_{\boldsymbol{\mu}^{\boldsymbol{\pi}}}}(k) \geq 3} \underline{\Delta}_{k}^{\boldsymbol{\pi}}\max \left\{\frac{3M}{{\Delta}_{k}^{\boldsymbol{\pi}}},\frac{162\Log{T}}{(\Delta_{k}^{\boldsymbol{\pi}})^{2}}\right\}} +  {\sum_{k: {\sigma^{-1}_{\boldsymbol{\mu}^{\boldsymbol{\pi}}}}(k) \geq 2}2{\Delta}_{k}^{\boldsymbol{\pi}}\max \left\{\frac{3M}{{\Delta}_{k}^{\boldsymbol{\pi}}},\frac{162\Log{T}}{(\Delta_{k}^{\boldsymbol{\pi}})^{2}}\right\}} +\frac{6K}{T^2} \\
     &\leq  { \sum_{k: {\sigma^{-1}_{\boldsymbol{\mu}^{\boldsymbol{\pi}}}}(k) \geq 3} \max \left\{{3M}{},\frac{162\Log{T}}{\underline{\Delta}_{k}^{\boldsymbol{\pi}}}\right\}} +  {\sum_{k: {\sigma^{-1}_{\boldsymbol{\mu}^{\boldsymbol{\pi}}}}(k) \geq 2}2\max \left\{{3M}{},\frac{162\Log{T}}{\Delta_{k}^{\boldsymbol{\pi}}}\right\}} +o(1)
\end{align}
\textbf{Second step:}
\begin{align}
       \lE \left[\sum_{t=1}^{\tau} \sqrt{\frac{2\log(T)}{t}}\right] &\leq  \sqrt{8\log(T)} \lE[\sqrt{\tau}] \\
       &\leq  \sqrt{8\log(T)} \sqrt{\lE \left[\tau \right]}  \quad \text{(concavity of square root function)} \\
       &\leq  \sqrt{8\log(T)}  \sqrt{2\sum_{k: {\sigma^{-1}_{\boldsymbol{\mu}^{\boldsymbol{\pi}}}}(k) \geq 2}\lE \left[ n_{k}(T) \right]}\\
       &\leq  \sqrt{8\log(T)}  \sqrt{2\sum_{k: {\sigma^{-1}_{\boldsymbol{\mu}^{\boldsymbol{\pi}}}}(k) \geq 2} \left[  \max \left\{\frac{3M}{\Delta_{k}^{\boldsymbol{\pi}}},\frac{162\Log{T}}{(\Delta_{k}^{\boldsymbol{\pi}})^2}\right\}+\frac{2}{T^2} \right]}\\
       &\leq  \sqrt{8\log(T)}  \sqrt{2\sum_{k: {\sigma^{-1}_{\boldsymbol{\mu}^{\boldsymbol{\pi}}}}(k) \geq 2}   \max \left\{\frac{3M}{\Delta_{k}^{\boldsymbol{\pi}}},\frac{162\Log{T}}{(\Delta_{k}^{\boldsymbol{\pi}})^2}\right\}+\frac{4K}{T^2} }\\
        &\leq  \sqrt{16\log(T)}  {\sum_{k: {\sigma^{-1}_{\boldsymbol{\mu}^{\boldsymbol{\pi}}}}(k) \geq 2} \sqrt{  \max \left\{\frac{3M}{\Delta_{k}^{\boldsymbol{\pi}}},\frac{162\Log{T}}{(\Delta_{k}^{\boldsymbol{\pi}})^2}\right\} }} + o(1)\\   
        &\overset{\text{Fact~\ref{Fact 2}}}{\leq}   {\sum_{k: {\sigma^{-1}_{\boldsymbol{\mu}^{\boldsymbol{\pi}}}}(k) \geq 2} {  \max \left\{{3M}{},\frac{162\Log{T}}{\Delta_{k}^{\boldsymbol{\pi}}}\right\} }} + o(1)\\   
\end{align}
\textbf{Third step:}
\begin{align}
\lE\left[(T-\tau)(\mu_{\sigma_{\boldsymbol{\mu}^{\boldsymbol{\pi}}}(2)}^{\boldsymbol{\pi}} - \tilde{\mu}_{\sigma_{\boldsymbol{\tilde\mu}^{\boldsymbol{\pi}}}(2)}^{\boldsymbol{\pi}})\right] = o(1)   \quad \text{(similar arguments \ref{start ref}--\ref{end ref})}   
\end{align}

Combining the three steps gives:
\begin{align}
    \mathcal{R}^{\boldsymbol{\pi}}_{T} \leq \cO\left(\sum_{k: {\sigma^{-1}_{\boldsymbol{\mu}^{\boldsymbol{\pi}}}}(k) \geq 2} \max \left\{M_{} ,\frac{\log(T)}{\Delta_{k}^{\boldsymbol{\pi}}}\right\}   + \sum_{k: {\sigma^{-1}_{\boldsymbol{\mu}^{\boldsymbol{\pi}}}}(k) \geq 3}  \max \left\{M_{} , \frac{\log(T)}{\underline{\Delta}_{k}^{\boldsymbol{\pi}}}\right\}  \right)    
\end{align}
\end{proof}
\section{Empirical Analysis}\label{sec: Empirical Analysis}

We conducted an experiment with six arms, where the rewards followed the order \((\mu_{1} > \mu_{2} > \mu_{3} > \mu_{4} > \mu_{5} > \mu_{6})\). The experiment spanned a horizon of $10^4$ time steps and was averaged over 100 epochs. The game's dynamics are inherently complex, as arm strategies are typically intractable in practical scenarios. To gather empirical evidence, we fixed certain strategies and monitored the player's regret while using Algorithm~\ref{Strategic ETC with eliminations}. Our study examined three specific scenarios:
\begin{enumerate}
    \item \textbf{Untruthful Arbitrary Reporting:} At each round, the selected arm randomly reports 100\%, 60\%, 40\%, 10\%, or 0\% of its observed reward.
    \item \textbf{Truthful Reporting:} This corresponds to the Dominant SPE.
    \item \textbf{"Optimal" Reporting:} Here we use the term "optimal" somewhat loosely. In this scenario, only the two best arms report truthfully, while the remaining suboptimal arms report 0.
\end{enumerate}

\begin{figure}[h]
\centering
\includegraphics[width=0.6\linewidth]{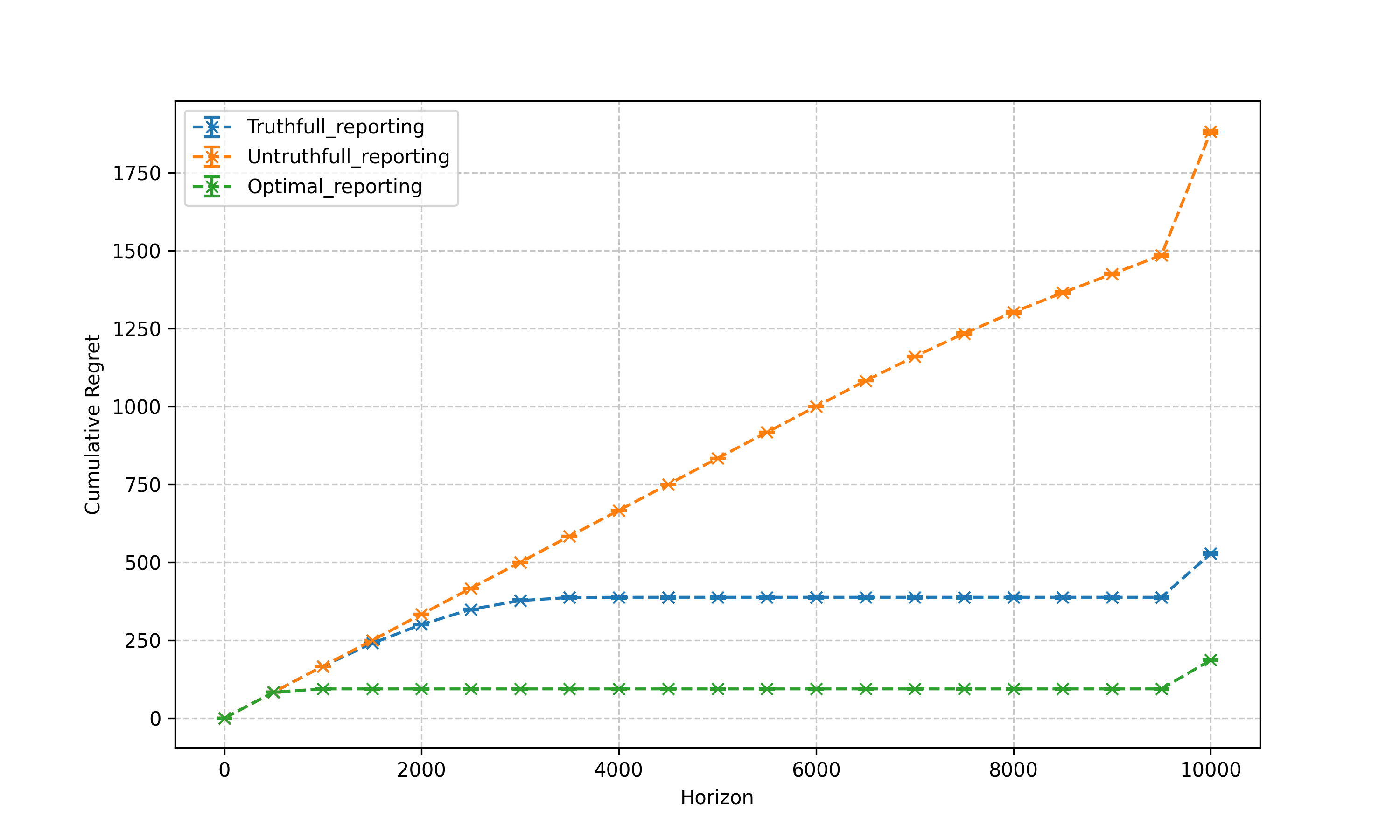}
\caption{\label{CumulativeRegret}\footnotesize{Cumulative regret under different strategies: \textbf{1.} Under untruthful arbitrary reporting, where arms arbitrarily choose to keep a portion of their reward. \textbf{2.} Under truthful reporting, where arms adhere to the dominant truthful SPE. \textbf{3.} Under "optimal" reporting, where only the two best arms report truthfully and the remaining suboptimal arms withhold the entirety of their observed reward.}}
\end{figure}

\begin{figure}[h]
\centering
\includegraphics[width=0.5\linewidth] {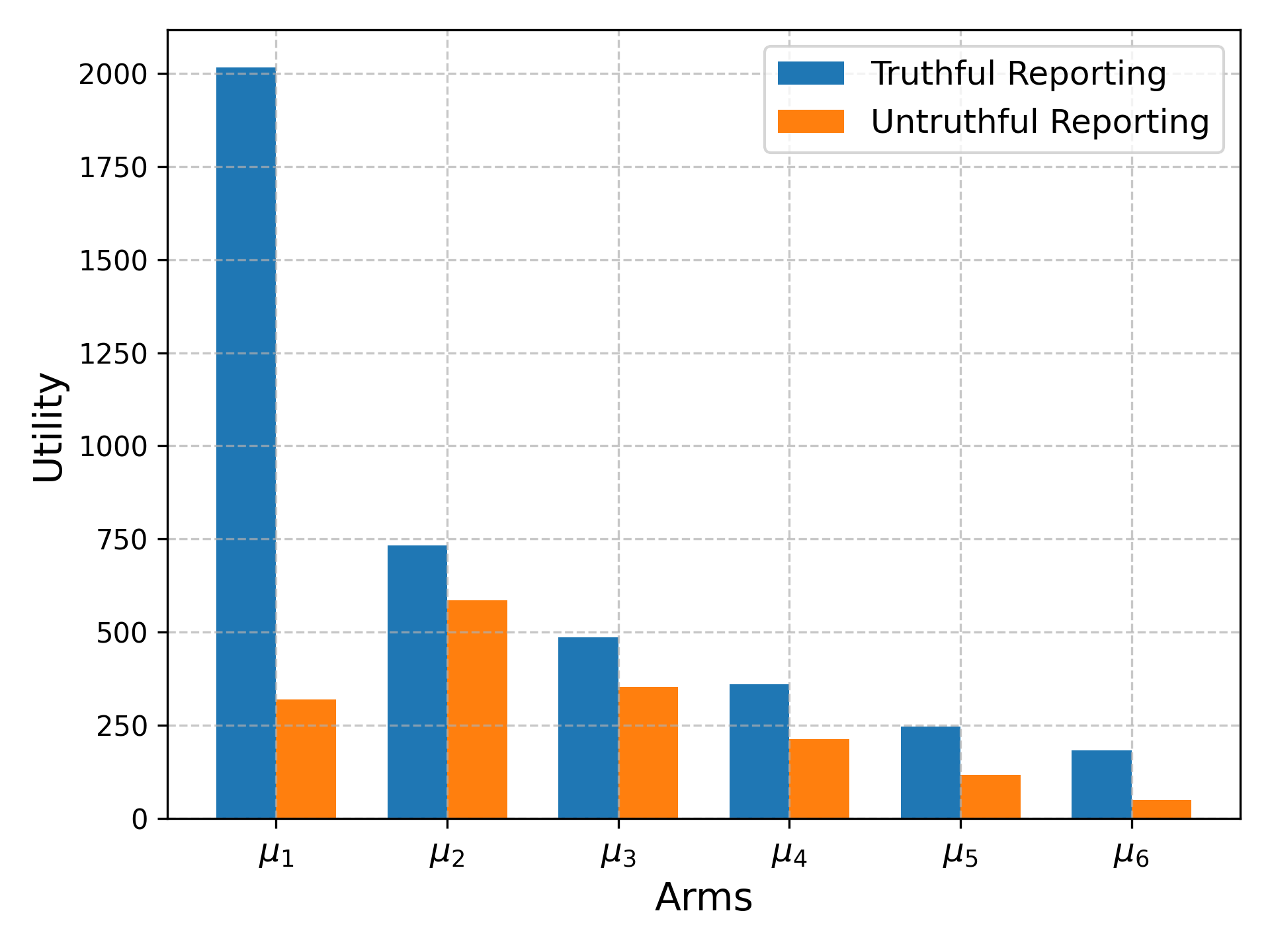}
\caption{\footnotesize{Comparison of arms' utilities between truthful reporting and arbitrary untruthful strategies}}\label{gain}
\end{figure}

In Figure~\ref{CumulativeRegret}, we present the cumulative regret of the player for each strategy, where the final point of each curve represents the bonus paid to the arms at the end of the game. The results are averaged over multiple epochs. As expected, the worst regret is observed in the first scenario, where arms withhold an arbitrary portion of the observed reward. For the two remaining scenarios, both exhibit logarithmic cumulative regret, with the "optimal" reporting scenario demonstrating a better factor. This is because the algorithm eliminates suboptimal arms more quickly and transitions faster to the second phase where it achieves better performance. While "optimal" reporting is challenging to achieve, \textbf{S-SE} effectively incentivizes arms to report truthfully, creating a dominant SPE and guaranteeing the player logarithmic regret.

Within the same experiment, we calculated the total gains for each arm under two scenarios: truthful reporting (i.e., the dominant Subgame Perfect Equilibrium, SPE) and arbitrary untruthful reporting, where arms misreport a portion of the observed rewards. As shown in Figure~\ref{gain}, we observed that under truthful reporting, the arms' utilities were higher compared to those under untruthful reporting, consistent with the theoretical predictions.

\end{document}